\newcommand{\E}{\mathbb{E}}
\newcommand{\mcA}{\mathcal{A}}
\newcommand{\mcS}{\mathcal{S}}
\newcommand{\mcU}{\mathcal{U}}
\newcommand{\mcV}{\mathcal{V}}
\newcommand{\mcX}{\mathcal{X}}
\newcommand{\cu}{c_{\mcU}}
\newcommand{\cv}{c_{\mcV}}
\newtheorem{theorem}{Theorem}
\newtheorem{lemma}{Lemma}
\newtheorem{corollary}{Corollary}
\title{A Constant-Factor Bi-Criteria Approximation Guarantee\\for $k$-means++}
 \author{Dennis Wei}
\begin{document}

\maketitle

\begin{abstract}
This paper studies the $k$-means++ algorithm for clustering as well as the class of $D^\ell$ sampling algorithms to which $k$-means++ belongs.  It is shown that for any constant factor $\beta > 1$, selecting $\beta k$ cluster centers by $D^\ell$ sampling yields a constant-factor approximation to the optimal clustering with $k$ centers, in expectation and without conditions on the dataset.  This result extends the previously known $O(\log k)$ guarantee for the case $\beta = 1$ to the constant-factor bi-criteria regime.  It also improves upon an existing constant-factor bi-criteria result that holds only with constant probability. 
\end{abstract}


\section{Introduction}
\label{sec:intro}

The $k$-means problem and its variants constitute one of the most popular paradigms for clustering \citep{Jain2010}.  Given a set of $n$ data points, the task is to group them into $k$ clusters, each defined by a cluster center, such that the sum of distances from points to cluster centers (raised to a power $\ell$) is minimized.  Optimal clustering in this sense is known to be NP-hard, in particular for $k$-means ($\ell = 2$) \citep{Dasgupta2008,Aloise2009,Mahajan2009,Awasthi2015} and $k$-medians ($\ell = 1$) \citep{Jain2002}.  In practice, the most widely used algorithm remains Lloyd's \citeyearpar{Lloyd1957,Lloyd1982} (often referred to as the $k$-means algorithm), which alternates between updating centers given cluster assignments and re-assigning points to clusters.  

In this paper, we study an enhancement to Lloyd's algorithm known as $k$-means++ \citep{Arthur2007} and the more general class of $D^\ell$ sampling algorithms to which $k$-means++ belongs.  These algorithms select cluster centers randomly from the given data points with probabilities proportional to their current costs.  The clustering can then be refined using Lloyd's algorithm.  $D^\ell$ sampling is attractive for two reasons:  First, it is guaranteed to yield an expected $O(\log k)$ approximation to the optimal clustering with $k$ centers \citep{Arthur2007}. 
Second, it is as simple as Lloyd's algorithm, both conceptually as well as computationally with $O(nkd)$ running time in $d$ dimensions. 

The particular focus of this paper is on the setting where an optimal $k$-clustering remains the benchmark but more than $k$ cluster centers can be sampled to improve the approximation.  Specifically, it is shown (see Theorem \ref{thm:main} and Corollary \ref{cor:main}) that for any constant factor $\beta > 1$, if $\beta k$ centers are chosen by $D^\ell$ sampling, then a constant-factor approximation to the optimal $k$-clustering is obtained.  This guarantee holds in expectation and for all datasets, like the one in \citet{Arthur2007}, and improves upon the $O(\log k)$ factor therein.  Such a result is known as a constant-factor bi-criteria approximation since both the optimal cost and the relevant degrees of freedom ($k$ in this case) are exceeded but only by constant factors.  

In the context of clustering, bi-criteria approximations can be valuable because an appropriate number of clusters $k$ is almost never known or pre-specified in practice.  Approaches to determining $k$ from the data are all ideally based on knowing how the optimal cost decreases as $k$ increases, but obtaining this optimal trade-off between cost and $k$ is NP-hard as mentioned earlier.  Alternatively, a simpler algorithm that has a constant-factor bi-criteria guarantee would ensure that the trade-off curve generated by this algorithm deviates by no more than constant factors along both axes from the optimal curve.  This may be more appealing than a deviation along the cost axis that grows with $k$.  Furthermore, if a solution with a specified number of clusters $k$ is truly required, then linear programming techniques can be used to select a $k$-subset from the $\beta k$ cluster centers while still maintaining a constant-factor approximation \citep{Aggarwal2009,Charikar2002}.

The main result in this paper differs from the constant-factor bi-criteria approximation established in \citet{Aggarwal2009} in that the latter holds only with constant probability as opposed to in expectation.  Using Markov's inequality, a constant-probability corollary can be derived from Theorem \ref{thm:main} herein, and doing so improves upon the approximation factor of \citet{Aggarwal2009} by more than a factor of $2$.  The present paper also differs from recent work on more general bi-criteria approximation of $k$-means by \citet{Makarychev2015}, which analyzes substantially more complex algorithms. 

In the next section, existing work on $D^\ell$ sampling and clustering approximations in general is reviewed in more detail.  Section \ref{sec:prelim} gives a formal statement of the problem, the $D^\ell$ sampling algorithm, and existing lemmas regarding the algorithm.  Section \ref{sec:results} states the main results of the paper and compares them to previous results.  Proofs are presented in Section \ref{sec:proofs} and the paper concludes in Section \ref{sec:concl}.

\subsection{Related Work}
\label{sec:relWork}

There is a considerable literature on approximation algorithms for $k$-means, $k$-medians, and related problems, spanning a wide range in the trade-off between tighter approximation factors and lower algorithm complexity.  At one end, exact algorithms \citep{Inaba1994} and several polynomial-time approximation schemes (PTAS) \citep{Matousek2000,Badoiu2002,delaVega2003,Har-Peled2004,Kumar2010,Chen2009,Feldman2007,Jaiswal2014} have been proposed for $k$-means and $k$-medians.  While these have polynomial running times in $n$, the dependence on $k$ and sometimes on the dimension $d$ is exponential or worse.  A simpler local search algorithm was shown to yield a $((3 + 2/p)^\ell + \epsilon)$ approximation for $k$-means ($\ell=2$) in \citet{Kanungo2004} and $k$-medians ($\ell=1$) in \citet{Arya2004}, the latter under the additional constraint that centers are chosen from a finite set.  This local search however requires a polynomial number of iterations of complexity $n^{O(p)}$, and \citet{Kanungo2004} also rely on a discretization to an $\epsilon$-approximate centroid set \citep{Matousek2000} of size $O(n \epsilon^{-d} \log(1/\epsilon))$.  Linear programming algorithms offer similar constant-factor guarantees with similar running times for $k$-medians (again the finite set variant) and the related problem of facility location \citep{Charikar2002,Jain2001}. 

In contrast to the above, this paper focuses on simpler algorithms in the $D^\ell$ sampling class, including $k$-means++.  In \citet{Arthur2007}, it was proved that $D^\ell$ sampling results in an $O(\log k)$ approximation, in expectation and for all datasets.  The current work builds upon \citet{Arthur2007} to extend the guarantee to the constant-factor bi-criteria regime.  \citet{Arthur2007} also provided a matching lower bound, exhibiting a dataset on which $k$-means++ achieves an expected $\Omega(\log k)$ approximation.

Sampling algorithms have been shown to yield improved $O(1)$ approximation factors provided that the dataset satisfies certain conditions.  Such a result was established in \citet{Ostrovsky2012} for $k$-means++ and other variants of Lloyd's algorithm under the condition that the dataset is well-suited in a sense to partitioning into $k$ clusters.  In \citet{Mettu2004}, an $O(1)$ approximation was shown for a somewhat more complicated algorithm called successive sampling with $O(n(k+\log n) + k^2 \log^2 n)$ running time, subject to a bound on the dispersion of the points.  A constant-factor approximation with slightly superlinear running time has also been obtained in the streaming setting \citep{Guha2003}.  

For $k$-means++, the $\Omega(\log k)$ lower bound in \citet{Arthur2007}, which holds in expectation, has spurred follow-on works on the question of whether $k$-means++ might guarantee a constant-factor approximation with reasonably large probability.  Negative answers were provided by \citet{Brunsch2013}, who showed that an approximation factor better than $(2/3) \log k$ cannot be achieved with probability higher than a decaying exponential in $k$, and \citet{Bhattacharya2014}, who showed that a similar statement holds even in $2$ dimensions.

In a similar direction to the one pursued in the present work, \citet{Aggarwal2009} showed that if the number of cluster centers can be increased to a constant factor times $k$, then a constant-factor approximation can be achieved with constant probability.  Specifically, they prove that using $\lceil 16 (k + \sqrt{k}) \rceil$ centers gives an approximation factor of $20$ with probability $0.03$, together with a general bi-criteria guarantee but without explicit constants.  An $O(1)$ factor was also obtained independently by \citet{Ailon2009} using more centers, of order $O(k\log k)$.  As mentioned, the result of \citet{Aggarwal2009} differs from Theorem \ref{thm:main} herein in being true with constant probability as opposed to in expectation.  Furthermore, Section \ref{sec:results:compare} shows that a constant-probability corollary of Theorem \ref{thm:main} improves significantly upon \citet{Aggarwal2009}. 

Recently, \citet{Makarychev2015} has also established constant-factor bi-criteria results for $k$-means.  Their work differs from the present paper in studying more complex algorithms.  First, similar to \citet{Kanungo2004}, \citet{Makarychev2015} reduce the $k$-means problem to an $\epsilon$-approximate, finite-set instance of $k$-medians of size $n^{O(\log(1/\epsilon) / \epsilon^2)}$.  Subsequently, linear programming and local search algorithms are considered, the latter the same as in \citet{Kanungo2004,Arya2004}, and both with polynomial complexity in the size of the $k$-medians instance.

\section{Preliminaries}
\label{sec:prelim}

\subsection{Problem Definition}

We are given $n$ points $x_1, \dots, x_n$ in a real metric space $\mcX$ with metric $D(x,y)$.  The objective is to choose $t$ cluster centers $c_1, \dots, c_t$ in $\mcX$ and assign points to the nearest cluster center to minimize the potential function 
\begin{equation}\label{eqn:potential}
\phi = \sum_{i=1}^{n} \min_{j=1,\dots,t} D(x_i, c_j)^\ell.
\end{equation}
A cluster is thus defined by the points $x_i$ assigned to a center $c_j$, where ties (multiple closest centers) are broken arbitrarily.  For a subset of points $\mcS$, define $\phi(\mcS) = \sum_{x_i\in\mcS} \min_{j=1,\dots,t} D(x_i, c_j)^\ell$ to be the contribution to the potential from $\mcS$; $\phi(x_i)$ is the contribution from a single point $x_i$. 

The exponent $\ell \geq 1$ in \eqref{eqn:potential} is regarded as a problem parameter.  Letting $\ell = 2$ and $D$ be Euclidean distance, we have what is usually known as the $k$-means problem, so-called because the optimal cluster centers are means of the points assigned to them.  The choice $\ell = 1$ is also popular and corresponds to the $k$-medians problem. 

Throughout this paper, an optimal clustering will always refer to one that minimizes \eqref{eqn:potential} over solutions with $t=k$ clusters, where $k \geq 2$ is given.  Likewise, the term optimal cluster and symbol $\mcA$ will refer to one of the $k$ clusters from this optimal solution.  The goal is to approximate the potential $\phi^\ast$ of this optimal $k$-clustering using $t=\beta k$ cluster centers for $\beta \geq 1$.  

\subsection{$D^\ell$ Sampling Algorithm}

The $D^\ell$ sampling algorithm chooses cluster centers randomly from $x_1,\dots,x_n$ with probabilities proportional to their current contributions to the potential, as detailed in Algorithm~\ref{alg:D_Sampling}.  Following \citet{Arthur2007}, the case $\ell = 2$ is referred to as the $k$-means++ algorithm and the probabilities used after the first iteration are referred to as $D^2$ weighting (hence $D^\ell$ in general).  For $t$ cluster centers, the running time of $D^\ell$ sampling is $O(ntd)$ in $d$ dimensions.

In practice, Algorithm~\ref{alg:D_Sampling} is used as an initialization to Lloyd's algorithm, which usually produces further decreases in the potential.  The analysis herein pertains only to Algorithm~\ref{alg:D_Sampling} and not to the subsequent improvement due to Lloyd's algorithm. 

\begin{algorithm}[tb]
   \caption{$D^\ell$ Sampling}
   \label{alg:D_Sampling}
\begin{algorithmic}
   \STATE {\bfseries Input:} Data points $x_1, \dots, x_n$, number of clusters $t$
   \STATE Select first cluster center $c_1$ uniformly at random from $x_1, \dots, x_n$.
   \STATE Compute $\phi(x_i)$ for $i=1,\dots,n$.
   \FOR{$j=2$ {\bfseries to} $t$}
   \STATE Select $j$th center $c_j = x_i$ with probability $\phi(x_i) / \phi$.
   \STATE Update $\phi(x_i)$ for $i=1,\dots,n$.
   \ENDFOR
\end{algorithmic}
\end{algorithm}

\subsection{Existing Lemmas Regarding $D^\ell$ Sampling}

The following lemmas synthesize results from \citet{Arthur2007} that bound the expected potential within a single optimal cluster due to selecting a center from that cluster with uniform or $D^\ell$ weighting, as in Algorithm~\ref{alg:D_Sampling}.  These lemmas define the constant $r_D^{(\ell)}$ appearing in the main results below and are also used in their proof.

\begin{lemma}\citep[Lemmas 3.1 and 5.1]{Arthur2007}\label{lem:firstCluster}
Given an optimal cluster $\mcA$, let $\phi$ be the potential resulting from selecting a first cluster center randomly from $\mcA$ with uniform weighting.  Then $\E[\phi(\mcA)] \leq r_u^{(\ell)} \phi^\ast(\mcA)$ for any $\mcA$, where 
\[
r_u^{(\ell)} = \begin{cases}
2, & \ell = 2 \text{ and } D \text{ is Euclidean},\\
2^\ell, & \text{otherwise}.
\end{cases}
\]
\end{lemma}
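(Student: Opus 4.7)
The plan is to compute $\E[\phi(\mcA)]$ directly by averaging over the uniform choice of $c \in \mcA$. Since $c$ is the only cluster center that has been selected, the contribution from $\mcA$ to the overall potential is simply $\phi(\mcA) = \sum_{x \in \mcA} D(x, c)^\ell$, so
\[
\E[\phi(\mcA)] = \frac{1}{|\mcA|} \sum_{c \in \mcA} \sum_{x \in \mcA} D(x, c)^\ell.
\]
It remains to bound this symmetric double sum by $r_u^{(\ell)} \phi^\ast(\mcA)$, with the sharper constant in the Euclidean $\ell = 2$ case and the weaker $2^\ell$ constant in general.

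In the Euclidean, $\ell = 2$ case I would invoke the bias–variance identity $\sum_{x \in \mcA} \|x - c\|^2 = \sum_{x \in \mcA} \|x - \mu\|^2 + |\mcA|\, \|c - \mu\|^2$, where $\mu = |\mcA|^{-1}\sum_{x \in \mcA} x$ is both the mean and the optimal $L^2$ center of $\mcA$. Averaging this identity over $c$ drawn uniformly from $\mcA$ yields
\[
\E[\phi(\mcA)] = \phi^\ast(\mcA) + \frac{1}{|\mcA|} \sum_{c \in \mcA} |\mcA|\, \|c - \mu\|^2 = 2 \phi^\ast(\mcA),
\]
since $\sum_{c \in \mcA} \|c - \mu\|^2$ is again just $\phi^\ast(\mcA)$. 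For the general case the centroid identity is unavailable, so I would fall back on the triangle inequality. Let $c^\ast \in \mcX$ be an optimal center for $\mcA$, so that $\phi^\ast(\mcA) = \sum_{x \in \mcA} D(x, c^\ast)^\ell$. Triangle inequality gives $D(x, c) \leq D(x, c^\ast) + D(c^\ast, c)$, and the power-mean inequality (convexity of $t \mapsto t^\ell$ for $\ell \geq 1$) then yields $D(x, c)^\ell \leq 2^{\ell-1}(D(x, c^\ast)^\ell + D(c^\ast, c)^\ell)$. Summing over $x, c \in \mcA$ and dividing by $|\mcA|$ produces
\[
\E[\phi(\mcA)] \leq 2^{\ell-1}\bigl(\phi^\ast(\mcA) + \phi^\ast(\mcA)\bigr) = 2^\ell \phi^\ast(\mcA),
\]
where the second $\phi^\ast(\mcA)$ arises because $\sum_{c \in \mcA} D(c^\ast, c)^\ell$ is literally $\phi^\ast(\mcA)$ under a relabeling.

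The main conceptual subtlety — and the reason the two cases split — is that the sharper constant $2$ in the Euclidean case relies on the centroid being the minimizer of a quadratic, which affords an equality rather than an inequality. In a general metric space there is no such structure, so the $2^{\ell-1}$ loss from $(a+b)^\ell \leq 2^{\ell-1}(a^\ell + b^\ell)$ is essentially forced, and together with the doubling from summing the two terms of the triangle inequality it delivers exactly the $2^\ell$ factor. Beyond recognizing the right reference point $c^\ast$ and applying these two elementary inequalities in the right order, no further machinery is needed.
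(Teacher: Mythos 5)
Your proposal is correct and matches the standard derivation in Arthur and Vassilvitskii (2007), which the paper cites rather than reproving. The Euclidean $\ell=2$ case is exactly the centroid (bias--variance) identity argument of their Lemma~3.1, and the general case is exactly the triangle-inequality-plus-power-mean argument behind their Lemma~5.1, so there is nothing new to reconcile.
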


\begin{lemma}\citep[Lemma 3.2]{Arthur2007}
\label{lem:singleCluster}
Given an optimal cluster $\mcA$ and an initial potential $\phi$, let $\phi'$ be the potential resulting from adding a cluster center selected randomly from $\mcA$ with $D^\ell$ weighting.  Then $\E[\phi'(\mcA)] \leq r_D^{(\ell)} \phi^\ast(\mcA)$ for any $\mcA$, where $r_D^{(\ell)} = 2^\ell r_u^{(\ell)}$.
\end{lemma}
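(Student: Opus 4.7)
The plan is to expand the expected potential as a double sum over ordered pairs in $\mcA \times \mcA$:
\[
\E[\phi'(\mcA)] = \sum_{a \in \mcA}\frac{\phi(a)}{\phi(\mcA)}\sum_{y \in \mcA}\min\bigl(\phi(y),\, D(y, a)^\ell\bigr),
\]
and then to replace the probability factor $\phi(a)/\phi(\mcA)$ by something uniform in $a$ so that Lemma~\ref{lem:firstCluster} can be applied to the residual sum.

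The bound on $\phi(a)$ is obtained by the standard two-step triangle argument. For any $y \in \mcA$, let $c$ be the existing center nearest to $y$; then $\phi(a) \leq D(a, c)^\ell$ and $D(a, c) \leq D(a, y) + \phi(y)^{1/\ell}$, so the power-mean inequality $(u + v)^\ell \leq 2^{\ell-1}(u^\ell + v^\ell)$ gives the pointwise bound $\phi(a) \leq 2^{\ell-1}\bigl(D(a, y)^\ell + \phi(y)\bigr)$. Summing this over $y \in \mcA$ and dividing by $|\mcA|$ yields $\phi(a) \leq (2^{\ell-1}/|\mcA|)(S_a + \phi(\mcA))$, where $S_a := \sum_{y \in \mcA} D(a, y)^\ell$.

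The decisive trick is to exploit the two complementary upper bounds on the inner sum $T_a := \sum_{y \in \mcA}\min(\phi(y), D(y, a)^\ell)$, namely $T_a \leq \phi(\mcA)$ (from $\min \leq \phi(y)$) and $T_a \leq S_a$ (from $\min \leq D(y, a)^\ell$). Substituting the bound on $\phi(a)$ into the expectation and distributing produces one term proportional to $\sum_a S_a T_a / \phi(\mcA)$ and one proportional to $\sum_a T_a$. Applying $T_a \leq \phi(\mcA)$ to the first and $T_a \leq S_a$ to the second collapses both into a constant multiple of $(1/|\mcA|)\sum_a S_a$. Since $(1/|\mcA|)\sum_a S_a$ is exactly the expectation of $\phi(\mcA)$ under uniform selection of a single center from $\mcA$, Lemma~\ref{lem:firstCluster} bounds it by $r_u^{(\ell)}\phi^\ast(\mcA)$, and the two collapsed contributions combine to give the overall factor $2 \cdot 2^{\ell-1} r_u^{(\ell)} = 2^\ell r_u^{(\ell)} = r_D^{(\ell)}$.

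The main obstacle is spotting the dual role of $T_a$. A naive pass using only $T_a \leq S_a$ everywhere leaves a $\sum_a S_a^2 / \phi(\mcA)$ contribution that Lemma~\ref{lem:firstCluster} cannot tame without an extra $|\mcA|$ factor, and likewise for $T_a \leq \phi(\mcA)$ alone. The economy of the proof comes from matching each of the two upper bounds on $\min(\phi(y), D(y, a)^\ell)$ to the piece of the expression produced by the corresponding half of the $\phi(a)$ bound, so that everything reduces to a single sum already controlled by Lemma~\ref{lem:firstCluster}.
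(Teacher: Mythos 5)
Your proof is correct and follows essentially the same route as the cited source, \citet[Lemma 3.2 and the remarks preceding Theorem 5.1]{Arthur2007}: the triangle inequality with the power-mean inequality to bound $\phi(a)$, the resulting split into two terms, and — crucially — the crossed application of $\min(\phi(y), D(y,a)^\ell) \leq \phi(y)$ to the term carrying the $S_a/\phi(\mcA)$ ratio and $\min(\phi(y), D(y,a)^\ell) \leq D(y,a)^\ell$ to the other, reducing both to $(1/|\mcA|)\sum_a S_a$ and invoking the uniform-sampling bound. The ``dual role of $T_a$'' you highlight is exactly the trick in the original argument.
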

The factor of $2^\ell$ between $r_u^{(\ell)}$ and $r_D^{(\ell)}$ for general $\ell$ is explained just before Theorem 5.1 in \citet{Arthur2007}.

\section{Main Results}
\label{sec:results}

The main results of this paper are stated below in terms of the single-cluster approximation ratio 
$r_D^{(\ell)}$ defined by Lemma 
\ref{lem:singleCluster}.  Subsequently in Section \ref{sec:results:compare}, the results are discussed in the context of previous work. 

\begin{theorem}\label{thm:main}
Let $\phi$ be the potential resulting from selecting $\beta k$ cluster centers according to Algorithm~\ref{alg:D_Sampling}, where $\beta \geq 1$. 
The expected approximation ratio is then bounded as 
\begin{align*}
\frac{\E[\phi]}{\phi^\ast} &\leq r_D^{(\ell)} \left( 1 + \min\left\{ \frac{\varphi (k-2)}{(\beta-1)k+\varphi}, H_{k-1} \right\} \right) - \Theta\left(\frac{1}{n}\right), 
\end{align*}
where $\varphi = (1 + \sqrt{5}) / 2 \doteq 1.618$ is the golden ratio and $H_k = 1 + \frac{1}{2} + \dots + \frac{1}{k} \sim \log k$ is the $k$th harmonic number. 
\end{theorem}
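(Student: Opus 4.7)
The strategy is to extend the inductive framework of Arthur and Vassilvitskii (2007), which yields the $H_{k-1}$ bound for the case $\beta = 1$, so as to exploit the extra $(\beta - 1)k$ samples available when $\beta > 1$. After all $\beta k$ centers have been selected, classify each optimal cluster as \emph{covered} if at least one center lies inside it and \emph{uncovered} otherwise, and write $\mcC$ and $\mcU$ for the unions of covered and uncovered clusters respectively. Lemma 2, applied inductively to each cluster at the moment it first becomes covered, combined with Lemma 1 for the initial (uniformly drawn) center, yields $\E[\phi(\mcC)] \leq r_D^{(\ell)} \phi^*(\mcC) - \Theta(\phi^*/n)$. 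The additive $\phi^*/n$ improvement reflects the tighter factor $r_u^{(\ell)} = r_D^{(\ell)}/2^\ell$ available for the first center, weighted by the $1/n$ uniform selection probabilities; this accounts for the $-\Theta(1/n)$ term in the theorem. The remaining task is to control $\E[\phi(\mcU)]$.

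The core of the argument is an inductive lemma, in the spirit of Lemma 3.3 of Arthur-Vassilvitskii, that remains valid when the number of remaining $D^\ell$ samples $t$ exceeds the current number of uncovered clusters $u$. Induction proceeds on $t + u$, with a case split according to whether the next sample falls in $\mcC$ (with probability $\phi(\mcC)/\phi$) or in some particular uncovered cluster $\mcA$ (with probability $\phi(\mcA)/\phi$); Lemma 2 handles the latter case, bounding the post-sampling contribution of $\mcA$ by $r_D^{(\ell)} \phi^*(\mcA)$ and transferring $\mcA$ from $\mcU$ to $\mcC$. Two complementary bounds fall out. The harmonic bound $\E[\phi(\mcU)]/\phi^*(\mcU) \leq 1 + H_{k-1}$ recovers Arthur-Vassilvitskii by restricting attention to the first $k$ of the $\beta k$ samples. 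The rational bound arises by positing an ansatz of the form $1 + \alpha u / ((t - u) + c)$ and pushing it through the recurrence; closure of the resulting identity imposes a quadratic consistency condition whose positive root forces $\alpha = c = \varphi = (1 + \sqrt{5})/2$, which is precisely the source of the golden ratio in the denominator. The $(\beta - 1)k$ in that denominator is then just the slack $t - u$ at the start of the inductive step.

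Combining the two branches of the inductive lemma via $\min$, adding back the covered-cluster bound above, and dividing through by $\phi^*$ produces Theorem 1. The factor $k - 2$ in the numerator tracks the number of optimal clusters that remain uncovered in the induction after accounting for the first two centers selected by the algorithm. The main obstacle is the inductive lemma itself: the correct ansatz must be identified so that both the logarithmic and the rational bounds emerge from a common induction, and the simultaneous bookkeeping of $\phi(\mcC)$, $\phi(\mcU)$, $u$, and $t$ through the recurrence must be tight enough to preserve the precise numerator $\varphi(k - 2)$. Once the ansatz is fixed, the remaining calculations, though intricate, are mechanical extensions of the Arthur-Vassilvitskii arguments already relied on by Lemmas 1 and 2.
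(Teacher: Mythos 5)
Your proposal captures the paper's high-level strategy: extend the Arthur--Vassilvitskii inductive lemma to the regime $t > u$ (more remaining samples than uncovered clusters), pursue a rational ansatz in $t-u$ whose closure under the recursion yields the golden ratio, and use the sharper uniform-sampling bound for the first center to extract the $-\Theta(1/n)$ correction. Those are indeed the paper's ingredients, and your observations about the slack $t - u = (\beta-1)k$ and the ansatz form $1 + \alpha u /((t-u)+c)$ with $\alpha = c = \varphi$ are accurate (in the paper this arises as the optimal choice within the family $a+1 \geq b$, $ab \geq 1$, giving $\cv(t,u)=1+\varphi u/(t-u+\varphi)$ and $\cu(t,u)=\cv(t-1,u-1)$).

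There is, however, a genuine gap in your decomposition. You propose to bound $\E[\phi(\mcC)]$ and $\E[\phi(\mcU)]$ \emph{separately}, where $\mcC$ and $\mcU$ are the covered and uncovered clusters at the \emph{end} of the process; you then say the inductive lemma ``controls $\E[\phi(\mcU)]$,'' with a purported harmonic bound $\E[\phi(\mcU)]/\phi^*(\mcU)\leq 1+H_{k-1}$. This is not how the inductive lemma works and is not well defined as stated, since $\mcC$ and $\mcU$ are random sets correlated with the sampling history. The paper's Lemma~\ref{lem:key} instead bounds the \emph{total} conditional expectation, $\E[\phi'\mid\phi]\leq \cv(t,u)\phi(\mcV)+\cu(t,u)\rho(\mcU)$, where $\mcV$, $\mcU$, and $\phi$ describe the \emph{initial} state. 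In the proof of Theorem~\ref{thm:main} this is applied exactly once, conditioning on which optimal cluster $\mcA$ the first (uniformly chosen) center lands in, with $\mcV=\mcA$, $u=k-1$, $t=\beta k -1$. The $H_{k-1}$ branch of the minimum is simply the original Arthur--Vassilvitskii bound on the total ratio $\E[\phi]/\phi^*$, carried over because extra centers cannot increase the potential; it is not a bound on a random uncovered set. Similarly the $-\Theta(1/n)$ term is not a self-contained improvement to $\E[\phi(\mcC)]$; it comes from the non-negativity of $C = r_D^{(\ell)}\cu(\beta k-1,k-1) - r_u^{(\ell)}\cv(\beta k-1,k-1)$ together with $\sum_{\mcA} n_{\mcA}\phi^*(\mcA)\geq 2\phi^*$. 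And the $k-2$ in the numerator is the shift $u-1=(k-1)-1$ built into $\cu(t,u)=\cv(t-1,u-1)$, not a count of ``the first two centers.'' Reformulating your argument to bound the total potential with a decomposition at the start of the induction, as the paper does, would close these gaps; the rest of your outline would then go through.
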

In the proof of Theorem~\ref{thm:main} in Section \ref{sec:proofs:main}, it is shown that the $1/n$ term is indeed non-positive and can therefore be omitted, 
with negligible loss for large $n$. 

The approximation ratio bound in Theorem \ref{thm:main} is stated as a function of $k$.  The following corollary confirms that 
the theorem also implies a constant-factor bi-criteria approximation. 
\begin{corollary}\label{cor:main}
With the same definitions as in Theorem \ref{thm:main}, the expected approximation ratio is bounded as 
\[
\frac{\E[\phi]}{\phi^\ast} \leq r_D^{(\ell)} \left( 1 + \frac{\varphi}{\beta-1} \right).
\]
\end{corollary}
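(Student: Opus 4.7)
The plan is to derive Corollary~\ref{cor:main} directly from Theorem~\ref{thm:main} by a short algebraic manipulation; there is no need to revisit the probabilistic analysis. Starting from the bound in Theorem~\ref{thm:main}, I would first drop the $-\Theta(1/n)$ term, which the paper states is non-positive and therefore only weakens the upper bound. Next, since $\min\{a,b\} \leq a$, I would replace the minimum by its first argument, leaving
\[
\frac{\E[\phi]}{\phi^\ast} \leq r_D^{(\ell)}\left(1 + \frac{\varphi(k-2)}{(\beta-1)k+\varphi}\right).
\]

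The remaining step is to show that $\frac{\varphi(k-2)}{(\beta-1)k+\varphi} \leq \frac{\varphi}{\beta-1}$ for all $k \geq 2$ and $\beta > 1$. Assuming $\beta > 1$ so that both denominators are positive, cross-multiplying reduces this inequality to
\[
(\beta-1)(k-2) \leq (\beta-1)k + \varphi,
\]
which simplifies to $-2(\beta-1) \leq \varphi$. Since $\beta \geq 1$, the left-hand side is non-positive while $\varphi > 0$, so the inequality holds trivially. For $\beta = 1$ the corollary is vacuous (the right-hand side is infinite).

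I expect no real obstacle here: the corollary is essentially a cleaner, $k$-free restatement of Theorem~\ref{thm:main}, and the entire argument is three lines of algebra. The only subtlety worth flagging in the write-up is justifying dropping $-\Theta(1/n)$ (pointing to the sign remark made after Theorem~\ref{thm:main}) and noting that the bound is tightest in the regime $\beta$ bounded away from $1$, which is the constant-factor bi-criteria regime of interest.
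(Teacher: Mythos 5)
Your proof is correct and takes essentially the same approach as the paper: both drop the non-positive $-\Theta(1/n)$ term, bound the minimum by its first argument, and then bound that argument uniformly in $k$ by $\varphi/(\beta-1)$. The paper phrases the last step as ``increasing in $k$ with asymptote $\varphi/(\beta-1)$'' whereas you verify it directly by cross-multiplication, but these are the same observation.
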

\begin{proof}
The minimum appearing in Theorem \ref{thm:main} is bounded from above by its first term.  This term is in turn increasing in $k$ with asymptote $\varphi / (\beta-1)$, which can therefore be taken as a $k$-independent bound. 
\end{proof}
It follows from Corollary \ref{cor:main} that a constant ``oversampling'' ratio $\beta > 1$ leads to a constant-factor approximation.  Theorem \ref{thm:main} offers a further refinement 
for finite $k$.

The bounds in Theorem \ref{thm:main} and Corollary \ref{cor:main} consist of two factors.  As $\beta$ increases, the second, parenthesized factor decreases to $1$ either exactly or approximately as $1/(\beta-1)$. 
The first factor of $r_D^{(\ell)}$ however is no smaller than $4$, and is a direct consequence of Lemma \ref{lem:singleCluster}.  Any improvement of Lemma \ref{lem:singleCluster} would therefore strengthen the approximation factors above.  This subject is briefly discussed in Section \ref{sec:concl}. 

\subsection{Comparisons to Existing Results}
\label{sec:results:compare}

A comparison of Theorem \ref{thm:main} to results in \citet{Arthur2007} is implicit in its statement since the $H_{k-1}$ term in the minimum comes directly from \citet[Theorems 3.1 and 5.1]{Arthur2007}.  For $k = 2, 3$, the first term in the minimum is smaller than $H_{k-1}$ for any $\beta \geq 1$, and hence Theorem \ref{thm:main} is always an improvement.  For $k > 3$, Theorem \ref{thm:main} improves upon \citet{Arthur2007} for $\beta$ greater than the critical value
\[
\beta_c = 1 + \frac{\phi(k-2 - H_{k-1})}{k H_{k-1}}.
\]
Numerical evaluation of $\beta_c$ shows that it reaches a maximum value of $1.204$ at $k = 22$ and then decreases back toward $1$ roughly as $1/H_{k-1}$.  It can be concluded that for any $k$, at most $20\%$ oversampling is required for Theorem \ref{thm:main} to guarantee a better approximation than \citet{Arthur2007}. 

The most closely related result to Theorem \ref{thm:main} and Corollary \ref{cor:main} is found in \citet[Theorem 1]{Aggarwal2009}.  The latter establishes a constant-factor bi-criteria approximation 
that holds with constant probability, as opposed to in expectation. Since a bound on the expectation implies a bound with constant probability via Markov's inequality, a direct comparison with \citet{Aggarwal2009} is possible. 
Specifically, for $\ell = 2$ and the $t = \lceil 16(k + \sqrt{k}) \rceil$ cluster centers assumed in \citet{Aggarwal2009}, Theorem \ref{thm:main} in the present work implies that 
\begin{align*}
\frac{\E[\phi]}{\phi^\ast} &\leq 8 \left( 1 + \min\left\{ \frac{\varphi (k-2)}{\lceil 15k + 16\sqrt{k} \rceil +\varphi}, H_{k-1} \right\} \right)\\ 
&\leq 8 \left( 1 + \frac{\varphi}{15} \right),
\end{align*}
after taking $k \to \infty$.  Then by Markov's inequality,  
\[
\frac{\phi}{\phi^\ast} \leq \frac{8}{0.97} \left( 1 + \frac{\varphi}{15} \right) \doteq 9.137
\]
with probability at least $1 - 0.97 = 0.03$ as in \citet{Aggarwal2009}.  This $9.137$ approximation factor 
is less than half the factor of $20$ in \citet{Aggarwal2009}.

Corollary \ref{cor:main} may also be compared to the results in \citet{Makarychev2015}, although it should be re-emphasized that the latter analyzes different, substantially more complex algorithms, with running time at least $n^{O(\log(1/\epsilon)/\epsilon^2)}$ for reasonably small $\epsilon$.  The main difference between Corollary \ref{cor:main} 
and the bounds in \citet{Makarychev2015} 
is the extra factor of $r_D^{(\ell)}$ since the factor of $1 + \phi/(\beta-1)$ is comparable, at least for moderate values of $\beta$ that are of practical interest. 
As discussed above and in Section \ref{sec:concl}, the factor of $r_D^{(\ell)}$ is due to 
Lemma \ref{lem:singleCluster} and is unlikely to be intrinsic to the $D^\ell$ sampling algorithm.

\section{Proofs}
\label{sec:proofs}

The overall strategy used to prove Theorem \ref{thm:main} is similar to that in \citet{Arthur2007}.  The key intermediate result is Lemma \ref{lem:key} below, which relates the potential at a later iteration in Algorithm \ref{alg:D_Sampling} to the potential at an earlier iteration.  
Section \ref{sec:proofs:key} is devoted to proving Lemma \ref{lem:key}.  Subsequently in Section \ref{sec:proofs:main}, Theorem \ref{thm:main} is proven by an application of Lemma \ref{lem:key}.

In the sequel, we say that an optimal cluster $\mcA$ is covered by a set of cluster centers if at least one of the centers lies in $\mcA$.  Otherwise $\mcA$ is uncovered.  Also define $\rho = r_D^{(\ell)} \phi^\ast$ as an abbreviation. 

\begin{lemma}\label{lem:key}
For an initial set of centers leaving $u$ optimal clusters uncovered, let $\phi$ denote the potential, $\mcU$ the union of uncovered clusters, and $\mcV$ the union of covered clusters.  Let $\phi'$ denote the potential resulting from adding $t \geq u$ centers, each selected randomly with $D^\ell$ weighting as in Algorithm \ref{alg:D_Sampling}.
Then the new potential is bounded in expectation as 
\[
\E[\phi' \mid \phi] \leq \cv(t,u) \phi(\mcV) + \cu(t,u) \rho(\mcU)
\]
for coefficients $\cv(t,u)$ and $\cu(t,u)$ that depend only on $t, u$.
This holds in particular for 
\begin{subequations}\label{eqn:key}
\begin{align}
\cv(t,u) &= 1 + \frac{\varphi u}{t-u+\varphi},\label{eqn:key_cv}\\
\cu(t,u) &= \begin{cases}
1 + \dfrac{\varphi (u-1)}{t-u+\varphi}, & u > 0,\\
0, & u = 0.
\end{cases}\label{eqn:key_cu}
\end{align}
\end{subequations}
\end{lemma}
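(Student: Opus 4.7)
The plan is to prove the bound by induction on $t$, establishing it for all $0 \leq u \leq t$. The base case $u = 0$ is immediate: monotonicity of the potential gives $\phi' \leq \phi = \phi(\mcV)$, matching $\cv(t,0) = 1$, $\cu(t,0) = 0$. For $u \geq 1$, I would condition on the first of the $t$ added centers and split into two cases depending on whether it lands in $\mcV$ or in $\mcU$.

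If the first sample lands in $\mcV$ (probability $\phi(\mcV)/\phi$), the covered/uncovered partition is unchanged; combining the inductive hypothesis at $(t-1, u)$ with the monotonicity $\phi''(\mcV) \leq \phi(\mcV)$ gives
\[
\E[\phi' \mid \text{first sample in } \mcV] \leq \cv(t-1, u)\,\phi(\mcV) + \cu(t-1, u)\,\rho(\mcU).
\]
If instead the first sample lands in some uncovered cluster $\mcA$ (probability $\phi(\mcA)/\phi$), then $\mcA$ migrates from $\mcU$ to $\mcV$; Lemma~\ref{lem:singleCluster} yields $\E[\phi''(\mcA) \mid \text{sample in } \mcA] \leq \rho(\mcA)$, and the inductive hypothesis at $(t-1, u-1)$ applied to the new partition gives
\[
\E[\phi' \mid \text{first sample in } \mcA] \leq \cv(t-1, u-1)[\phi(\mcV) + \rho(\mcA)] + \cu(t-1, u-1)[\rho(\mcU) - \rho(\mcA)].
\]
Averaging this over $\mcA$ with weights $\phi(\mcA)/\phi(\mcU)$ and combining the two cases via the law of total expectation produces an upper bound on $\E[\phi' \mid \phi]$ in the four quantities $\phi(\mcV)$, $\phi(\mcU)$, $\rho(\mcU)$, and the weighted inner product $\sum_{\mcA \subseteq \mcU} \phi(\mcA)\,\rho(\mcA)/\phi(\mcU)$.

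What remains is to verify algebraically that this upper bound is dominated by $\cv(t, u)\,\phi(\mcV) + \cu(t, u)\,\rho(\mcU)$ for the specific expressions in \eqref{eqn:key}; this is the main obstacle. The crude estimate $\sum_\mcA \phi(\mcA)\,\rho(\mcA) \leq \phi(\mcU)\,\rho(\mcU)$ is too loose for the recursion to close, so a sharper bound---one that exploits $\sum_\mcA \phi(\mcA) = \phi(\mcU)$, $\sum_\mcA \rho(\mcA) = \rho(\mcU)$, and the cluster count $u$ in a power-mean or Cauchy--Schwarz-style manner---is needed. With such an estimate, the inductive step should reduce to a quadratic inequality in $\phi(\mcV)$, $\phi(\mcU)$, $\rho(\mcU)$ whose tight closure pins the constant in the ansatz to the golden ratio via $\varphi^2 = \varphi + 1$, explaining the appearance of $\varphi$. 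Small direct verifications will be required at the boundary $t = u$ (where the Case-A recursion would invoke an out-of-range pair) and in the sub-case $u = 1$ (where $\cu(t-1, 0) = 0$), the latter serving as a clean sanity check on the setup before attacking $u \geq 2$.
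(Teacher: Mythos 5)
Your conditioning on the first added center and the resulting covered/uncovered case split are exactly the decomposition the paper uses, and the induction structure (establishing $(t,u)$ from $(t-1,u)$ and $(t-1,u-1)$) matches the paper's after relabelling. However, there are two genuine gaps in the plan.

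First, your diagnosis of the obstacle is wrong. You claim the estimate $\sum_{\mcA\subseteq\mcU}\phi(\mcA)\rho(\mcA)\leq\phi(\mcU)\rho(\mcU)$ is ``too loose'' and must be sharpened via a power-mean or Cauchy--Schwarz argument that exploits the cluster count $u$. In fact that estimate is already tight (equality when all uncovered $\phi$- and $\rho$-mass sit in a single cluster), and, more to the point, it is not the source of the failure. The recursion actually fails to close in the regime $\phi(\mcU)\to 0$: there the covered-case probability tends to $1$, the inner-product term vanishes identically, and the bound degenerates to $\cv(t-1,u)\phi(\mcV)+\cu(t-1,u)\rho(\mcU)$, which strictly exceeds the target $\cv(t,u)\phi(\mcV)+\cu(t,u)\rho(\mcU)$ because $\cv(\cdot,u)$ is strictly decreasing in its first argument for $u>0$. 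No improvement to the inner-product estimate can touch this. The missing idea is to \emph{supplement} the inductive bound with the trivial bound $\E[\phi'\mid\phi]\leq\phi=\phi(\mcU)+\phi(\mcV)$, take the minimum, and then \emph{maximize over the nuisance quantity $\phi(\mcU)$} (Lemma~\ref{lem:elimU}). This produces a bound of the form $\tfrac12\cv(t-1,u-1)(\phi(\mcV)+\rho(\mcU))+\tfrac12\sqrt{Q}$ with $Q$ quadratic in $\phi(\mcV),\rho(\mcU)$, which is then linearized (Lemma~\ref{lem:linear}). The parameter constraints that make both the linearization and the base cases work, $a+1\geq b$ and $ab\geq 1$, are what pin the optimal ansatz to $a=\varphi-1$, $b=\varphi$ via $\varphi(\varphi-1)=1$; so your intuition that $\varphi^2=\varphi+1$ emerges from a quadratic is right, but the quadratic arises from the $\phi(\mcU)$-maximization, not from a sharpened inner-product bound.

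Second, the boundary $t=u$ is not a ``small direct verification.'' The standard Arthur--Vassilvitskii bound at $t=u$ gives coefficient $1+H_u$ on $\rho(\mcU)$, whereas \eqref{eqn:key} requires at most $\cu(u,u)=u$; already at $u=1$ this reads $1+H_1=2>1=\cu(1,1)$. The paper therefore proves a strengthened version of that lemma with coefficient $1+H_{t-1}$ on $\rho(\mcU)$ (Lemma~\ref{lem:AV2007Stronger}), via a more careful treatment of the uncovered-case recursion in the $t\leq u$ regime, and this strengthening is necessary to initialize the diagonal of the induction consistently with the proposed closed form.
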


\subsection{Proof of Lemma \ref{lem:key}}
\label{sec:proofs:key}

Lemma~\ref{lem:key} is proven using induction, showing that if it holds for $(t,u)$ and $(t,u+1)$, then it also holds for $(t+1, u+1)$, similar to the proof of \citet[Lemma 3.3]{Arthur2007}.  The proof is organized into three parts.
Section~\ref{sec:proofs:keyBaseCases} provides base cases.  In Section~\ref{sec:proofs:key(a)}, sufficient conditions on the coefficients $\cv(t,u)$, $\cu(t,u)$ are derived that allow the inductive step to be completed. 
In Section~\ref{sec:proofs:key(b)}, it is shown 
that the closed-form expressions in \eqref{eqn:key} are consistent with the base cases in Section~\ref{sec:proofs:keyBaseCases} and 
satisfy the sufficient conditions from 
Section~\ref{sec:proofs:key(a)}, thus completing the proof.  

\subsubsection{Base cases}
\label{sec:proofs:keyBaseCases}

This subsection exhibits two base cases of Lemma~\ref{lem:key}.  While the second of these base cases does not conform to the functional forms in \eqref{eqn:key}, it is shown later in Section~\ref{sec:proofs:key(b)} that the same base cases also hold with coefficients given by \eqref{eqn:key}. 

The first case corresponds to $u = 0$, for which we have $\phi(\mcV) = \phi$.  Since adding centers cannot increase the potential, i.e.\ $\phi' \leq \phi$ deterministically, Lemma~\ref{lem:key} holds with 
\begin{equation}\label{eqn:baseCase0}
\cv(t,0) = 1, \quad \cu(t,0) = 0, \quad t \geq 0.
\end{equation}

The second base case occurs for $t = u$, $u \geq 1$.  For this purpose, a slightly strengthened version of \citet[Lemma 3.3]{Arthur2007} is used, as given next.

\begin{lemma}\label{lem:AV2007Stronger}
With the same definitions as in Lemma~\ref{lem:key} except with $t \leq u$, we have 
\[
\E[\phi' \mid \phi] \leq (1 + H_t) \phi(\mcV) + (1 + H_{t-1}) \rho(\mcU) + \frac{u-t}{u} \phi(\mcU),
\]
where we define $H_{0} = 0$ and $H_{-1} = -1$ for convenience.
\end{lemma}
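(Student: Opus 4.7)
The plan is to induct on $t$, paralleling the argument of \citet[Lemma~3.3]{Arthur2007} but tracking the coefficients of $\phi(\mcV)$ and $\rho(\mcU)$ separately in order to propagate the strengthened factor $(1+H_{t-1})$ in front of $\rho(\mcU)$. For the base case $t=0$, no centers are added, so $\phi' = \phi$; the stated bound evaluates to $\phi(\mcV) + (1+H_{-1})\rho(\mcU) + \phi(\mcU) = \phi(\mcV) + \phi(\mcU) = \phi$ by the convention $H_{-1}=-1$, with equality. This convention is chosen precisely to kill the $\rho(\mcU)$ term at $t=0$.

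For the inductive step, assume the claim for $(t,u)$ and $(t, u-1)$ and prove it for $(t+1, u)$ with $t+1 \leq u$. I would condition on the first of the $t+1$ new centers, drawn by $D^\ell$ weighting. With probability $\phi(\mcV)/\phi$ it lies in some covered cluster, and the inductive hypothesis for $(t, u)$ combined with the monotonicity $\phi^{(1)}(\mcV) \leq \phi(\mcV)$ and $\phi^{(1)}(\mcU) \leq \phi(\mcU)$ bounds the expected potential of the remaining $t$ picks. With probability $\phi(\mcA)/\phi$ for each $\mcA \in \mcU$, it lies in $\mcA$; Lemma~\ref{lem:singleCluster} then gives $\E[\phi^{(1)}(\mcA) \mid \text{pick from } \mcA] \leq \rho(\mcA)$, and the inductive hypothesis for $(t, u-1)$ bounds the remainder with $\mcA$ moved from $\mcU$ into $\mcV$.

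Summing the two branches and using $\phi(\mcV) + \phi(\mcU) = \phi$, the principal contributions collapse cleanly to $(1+H_t)\phi(\mcV) + (1+H_{t-1})\rho(\mcU)$. The residual produces a term $\frac{1}{t}\sum_{\mcA \in \mcU}\phi(\mcA)\rho(\mcA)$, whose coefficient $1/t = H_t - H_{t-1}$ is exactly the gap between the two inductive levels in the coefficient of $\rho(\mcU)$, together with terms in $\phi(\mcU)^2$ and $\sum_\mcA \phi(\mcA)^2$ coming from the $\frac{u-1-t}{u-1}\phi(\mcU)$ correction. I would bound $\sum_\mcA \phi(\mcA)\rho(\mcA) \leq \phi(\mcU)\rho(\mcU)$ directly and apply the power-mean inequality to obtain $\sum_\mcA \phi(\mcA)^2 \geq \phi(\mcU)^2/u$, which is legitimate because the coefficient $\frac{u-1-t}{u-1}$ is nonnegative when $t+1 \leq u$. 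After substitution, the difference between the target bound for $(t+1, u)$ and the combined expression simplifies to $\phi(\mcV)\bigl[\phi(\mcU)/u - \rho(\mcU)/t - \phi/(t+1)\bigr]$, which is nonpositive because $t+1 \leq u$ forces $\phi/(t+1) \geq \phi/u \geq \phi(\mcU)/u$.

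The main obstacle I anticipate is the bookkeeping needed to carry the improved coefficient $(1+H_{t-1})$ through the combination without letting the extra $\frac{1}{t}\sum_\mcA \phi(\mcA)\rho(\mcA)$ residual overwhelm the available slack; it is essential here that the slack $\phi/(t+1) - \phi(\mcU)/u$ is strictly positive whenever $\phi(\mcV) > 0$. The boundary cases $u-1=0$, where the inductive hypothesis is invoked at $(t,0)$, and $t+1 = u$, where the residual $\frac{u-t-1}{u}\phi(\mcU)$ vanishes, require the conventions $H_0 = 0$ and $H_{-1} = -1$, plus a check that the weaker formula-based bound at $u=0$ still dominates the deterministic bound $\phi' \leq \phi(\mcV)$; this is immediate since $H_t \geq 0$.
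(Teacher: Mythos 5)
Your plan is essentially the paper's proof (Appendix~A): induct on $t$ exactly as in \citet[Lemma~3.3]{Arthur2007}, carry the strengthened factor $(1+H_{t-1})$ on $\rho(\mcU)$ through the covered/uncovered split, bound $\sum_{\mcA}\phi(\mcA)\rho(\mcA)\le\phi(\mcU)\rho(\mcU)$, apply the power-mean inequality to $\sum_{\mcA}\phi(\mcA)^2$, and close the induction by observing that the slack from $t+1\le u$ absorbs the extra $\frac{1}{t}\phi(\mcU)\rho(\mcU)$ residual. The conventions $H_0=0$, $H_{-1}=-1$ play exactly the role you identify.

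Two small imprecisions worth flagging. First, the paper explicitly treats $t=u=1$ as a second base case (borrowed from \citet{Arthur2007}); this is what cleanly avoids the $\frac{u-1-t}{u-1}$ factor becoming $0/0$ when the uncovered-case IH is invoked at $u-1=0$. You gesture at the $u-1=0$ boundary but do not actually settle it, and your comment about ``the weaker formula-based bound at $u=0$'' does not resolve the indeterminate factor; it is cleanest to verify $(1,1)$ separately as the paper does. Second, your stated final expression is off by a sign convention and a factor of $1/\phi$: the correct quantity, expressed as (target) minus (combined bound), is
\[
\frac{\phi(\mcV)}{\phi}\left[\frac{\phi}{t+1}+\frac{\rho(\mcU)}{t}-\frac{\phi(\mcU)}{u}\right],
\]
and the point is that this is \emph{non-negative} under $t+1\le u$. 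You write its negative (and drop the $1/\phi$) while calling it ``the difference between the target bound and the combined expression'' and asserting non-positivity, which is internally inconsistent as labeled even though the inequality you ultimately check, $\phi/(t+1)\ge\phi/u\ge\phi(\mcU)/u$, is precisely the right one. The paper reaches the same conclusion via a monotonicity-in-$\phi(\mcU)$ argument together with $1/u\le1/t$; your direct subtraction is equivalent once the bookkeeping is corrected.
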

The improvement is in the coefficient in front of $\rho(\mcU)$, from $(1 + H_t)$ to $(1 + H_{t-1})$.  The proof follows that of \citet[Lemma 3.3]{Arthur2007} with some differences and is deferred to Appendix \ref{app:AV2007Stronger}. 

Specializing to the case $t = u$, Lemma~\ref{lem:AV2007Stronger} coincides with Lemma~\ref{lem:key} with coefficients 
\begin{equation}\label{eqn:baseCaseDiag}
\cv(u,u) = 1 + H_u, \quad \cu(u,u) = 1 + H_{u-1}.
\end{equation}

\subsubsection{Sufficient conditions on coefficients}
\label{sec:proofs:key(a)}

In this subsection, it is assumed inductively that Lemma \ref{lem:key} holds for $(t,u)$ and $(t,u+1)$.  The induction to the case $(t+1, u+1)$ is then completed under the following sufficient conditions on the coefficients:
\begin{subequations}\label{eqn:condLinear}
\begin{gather}
\cv(t,u+1) \geq 1,\label{eqn:condLinear1}\\
(\cv(t,u+1) - \cu(t,u+1)) \cv(t,u)^2 \geq (\cu(t,u+1) - \cv(t,u))^2,\label{eqn:condLinear2}
\end{gather}
\end{subequations}
and
\begin{subequations}\label{eqn:recursions}
\begin{align}
\cv(t+1, u+1) &\geq \frac{1}{2} \left[ \cv(t,u) + \left( \cv(t,u)^2 + 4\max\{\cv(t,u+1) - \cv(t,u), 0\} \right)^{1/2} \right],\label{eqn:recursion_cv}\\
\cu(t+1, u+1) &\geq \cv(t,u).\label{eqn:recursion_cu}
\end{align}
\end{subequations}
The first pair of conditions \eqref{eqn:condLinear} applies to the coefficients involved in the inductive hypothesis for $(t,u)$ and $(t,u+1)$.  The second pair \eqref{eqn:recursions} can be seen as a recursive specification of the new coefficients for $(t+1,u+1)$.  This inductive step together with base cases \eqref{eqn:baseCase0} and \eqref{eqn:baseCaseDiag} 
are sufficient to extend Lemma \ref{lem:key} to all $t > u$, starting with $(t+1,u+1) = (2,1)$ from $(t,u) = (1,0)$ and $(t,u+1) = (1,1)$.

The inductive step is broken down into a series of three lemmas, each building upon the last.  The first lemma applies the inductive hypothesis to derive a bound on the potential that depends not only on $\phi(\mcV)$ and $\rho(\mcU)$ but also on $\phi(\mcU)$.
\begin{lemma}\label{lem:inductHyp}
Assume that Lemma~\ref{lem:key} holds for $(t,u)$ and $(t,u+1)$.  Then for the case $(t+1, u+1)$, i.e.\ $\phi$ corresponding to $u+1$ uncovered clusters and $\phi'$ resulting after adding $t+1$ centers, 
\begin{multline*}
\E[\phi' \mid \phi] \leq \min\left\{ \frac{\cv(t,u) \phi(\mcU) + \cv(t,u+1) \phi(\mcV)}{\phi(\mcU) + \phi(\mcV)} \phi(\mcV) \right.\\
\left. {} + \frac{\cv(t,u) \phi(\mcU) + \cu(t,u+1) \phi(\mcV)}{\phi(\mcU) + \phi(\mcV)} \rho(\mcU), \phi(\mcU) + \phi(\mcV) \right\}.
\end{multline*}
\end{lemma}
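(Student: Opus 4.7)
The plan is to decompose $\E[\phi'\mid\phi]$ by conditioning on the location of the first of the $t+1$ newly added centers, apply the inductive hypothesis for $(t,u)$ or $(t,u+1)$ to the residual process of $t$ remaining additions, and combine the two cases. The second argument $\phi(\mcU)+\phi(\mcV)$ of the min is immediate since adding centers never increases the potential: $\phi'\le\phi(\mcU)+\phi(\mcV)$ pointwise and hence in expectation. The real work is in the first argument.

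The first additional center is drawn with $D^\ell$ weighting, so it lies in $\mcU$ with probability $\phi(\mcU)/\phi$ and in $\mcV$ with probability $\phi(\mcV)/\phi$, and I would decompose accordingly. For the $\mcV$-case the set of uncovered clusters is unchanged, the potential on $\mcV$ weakly decreases, and the inductive hypothesis for $(t,u+1)$ applied to the post-addition state (using $\phi^{(1)}(\mcV)\le\phi(\mcV)$) gives
\[
\E[\phi'\mid\text{first center in }\mcV] \;\le\; \cv(t,u+1)\,\phi(\mcV) + \cu(t,u+1)\,\rho(\mcU),
\]
which is exactly the $\mcV$-half of the stated numerator.

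For the $\mcU$-case I would further condition on which uncovered cluster $\mcA$ contains the first center (with conditional probability $\phi(\mcA)/\phi(\mcU)$). After this addition there are $u$ uncovered clusters $\mcU\setminus\mcA$ and new covered region $\mcV\cup\mcA$, so the inductive hypothesis for $(t,u)$ applies. Bounding $\phi^{(1)}(\mcV\cup\mcA)\le \phi(\mcV)+\phi^{(1)}(\mcA)$ and $\rho(\mcU\setminus\mcA)=\rho(\mcU)-\rho(\mcA)$, and invoking Lemma~\ref{lem:singleCluster} on the conditional $D^\ell$ draw inside $\mcA$ to get $\E[\phi^{(1)}(\mcA)\mid\text{center in }\mcA]\le\rho(\mcA)$, produces the per-$\mcA$ bound
\[
\E[\phi'\mid\text{center in }\mcA] \;\le\; \cv(t,u)\,\phi(\mcV) + \cu(t,u)\,\rho(\mcU) + (\cv(t,u)-\cu(t,u))\,\rho(\mcA).
\]
Averaging over $\mcA$ with weights $\phi(\mcA)/\phi(\mcU)$ and using $\sum_{\mcA}\phi(\mcA)\rho(\mcA)/\phi(\mcU)\le \rho(\mcU)$ folds the residual term into $\rho(\mcU)$ (provided $\cv(t,u)\ge\cu(t,u)$), yielding $\E[\phi'\mid\text{center in }\mcU]\le\cv(t,u)[\phi(\mcV)+\rho(\mcU)]$, the $\mcU$-half of the stated numerator. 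Combining the two cases with probabilities $\phi(\mcU)/\phi$ and $\phi(\mcV)/\phi$ matches the first argument of the min.

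The main obstacle I expect is the bookkeeping around the monotonicity $\cv(t,u)\ge\cu(t,u)$ that is needed to collapse the per-cluster residual $\rho(\mcA)$ into the aggregate $\rho(\mcU)$ after averaging. It is easily verified for the closed-form coefficients \eqref{eqn:key} and for the base cases \eqref{eqn:baseCase0} and \eqref{eqn:baseCaseDiag}, but since Lemma~\ref{lem:key} is stated generically one must either carry $\cv\ge\cu$ as a supplementary invariant through the induction or check it directly for the two hypothesized instances used here. Aside from this subtlety, the argument is a clean two-case decomposition plus single invocations of Lemma~\ref{lem:singleCluster} and of the two inductive hypotheses.
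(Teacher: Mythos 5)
Your proposal follows essentially the same route as the paper's proof: the same two-case split on whether the first new center lands in $\mcV$ or $\mcU$, the same application of the two inductive hypotheses, the same use of Lemma~\ref{lem:singleCluster} to replace $\phi^1(\mcA)$ by $\rho(\mcA)$, and the same inner-product bound $\sum_{\mcA\subseteq\mcU}\phi(\mcA)\rho(\mcA)\leq\phi(\mcU)\rho(\mcU)$ to collapse the residual term. Your flag on the implicit assumption $\cv(t,u)\geq\cu(t,u)$ is well taken: the paper indeed uses it silently in passing from the summed bound to \eqref{eqn:inductHypUncov}, but it holds for all coefficients of the form \eqref{eqn:closedForm}, since $\cu(t,u)=\cv(t-1,u-1)=1+(a+1)(u-1)/(t-u+b)\leq\cv(t,u)$, so the induction goes through as written.
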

\begin{proof}
We consider the two cases in which the first of the $t+1$ new centers is chosen from either the covered set $\mcV$ or the uncovered set $\mcU$, similar to the proof of Lemma~\ref{lem:AV2007Stronger}.  Denote by $\phi^1$ the potential after adding the first new center.

\emph{Covered case:} This case occurs with probability $\phi(\mcV) / \phi$ and leaves the covered and uncovered sets unchanged.  We then invoke Lemma~\ref{lem:key} with $(t, u+1)$ (one fewer center to add) and $\phi^1$ playing the role of $\phi$.  The contribution to $\E[\phi' \mid \phi]$ from this case is then bounded by 
\begin{align}
&\frac{\phi(\mcV)}{\phi} \left( \cv(t, u+1) \phi^1(\mcV) + \cu(t, u+1) \rho(\mcU) \right)\nonumber\\
\leq {} &\frac{\phi(\mcV)}{\phi} \left( \cv(t, u+1) \phi(\mcV) + \cu(t, u+1) \rho(\mcU) \right),\label{eqn:inductHypCov}
\end{align}
noting that $\phi^1(\mcS) \leq \phi(\mcS)$ for any set $\mcS$.

\emph{Uncovered case:} We consider each uncovered cluster $\mcA \subseteq \mcU$ separately.  With probability $\phi(\mcA) / \phi$, the first new center is selected from $\mcA$, moving $\mcA$ from the uncovered to the covered set and reducing the number of uncovered clusters by one.  Applying Lemma~\ref{lem:key} for $(t,u)$, the contribution to $\E[\phi' \mid \phi]$ is bounded by
\[
\frac{\phi(\mcA)}{\phi} \left[ \cv(t, u) \left(\phi^1(\mcV) + \phi^1(\mcA)\right) + \cu(t, u) (\rho(\mcU) - \rho(\mcA)) \right].
\]
Taking the expectation with respect to possible centers in $\mcA$ 
and using Lemma~\ref{lem:singleCluster} and $\phi^1(\mcV) \leq \phi(\mcV)$, we obtain the further bound 
\[
\frac{\phi(\mcA)}{\phi} \left[ \cv(t, u) (\phi(\mcV) + \rho(\mcA)) + \cu(t, u) (\rho(\mcU) - \rho(\mcA)) \right].
\]
Summing over $\mcA \subseteq \mcU$ yields 
\begin{align}
&\frac{\phi(\mcU)}{\phi} (\cv(t, u) \phi(\mcV) + \cu(t, u) \rho(\mcU)) + \frac{\cv(t, u) - \cu(t, u)}{\phi} \sum_{\mcA\subseteq\mcU} \phi(\mcA) \rho(\mcA)\nonumber\\
\leq {} &\frac{\phi(\mcU)}{\phi} \cv(t, u) (\phi(\mcV) + \rho(\mcU)),\label{eqn:inductHypUncov}
\end{align}
using the inner product bound \eqref{eqn:innerProdBound}.

The result follows from summing \eqref{eqn:inductHypCov} and \eqref{eqn:inductHypUncov} and combining with the trivial bound $\E[\phi' \mid \phi] \leq \phi = \phi(\mcU) + \phi(\mcV)$.
\end{proof}

As noted above, the bound in Lemma~\ref{lem:inductHyp} depends on $\phi(\mcU)$, the potential over uncovered clusters.  This quantity can be arbitrarily large or small.  In the next lemma, $\phi(\mcU)$ is eliminated by maximizing with respect to it.
\begin{lemma}\label{lem:elimU}
Assume that Lemma~\ref{lem:key} holds for $(t,u)$ and $(t,u+1)$ with $\cv(t,u+1) \geq 1$.  Then for the case $(t+1, u+1)$ in the sense of Lemma~\ref{lem:inductHyp}, 
\[
\E[\phi' \mid \phi] \leq \, \frac{1}{2} \, \cv(t,u) (\phi(\mcV) + \rho(\mcU)) + \frac{1}{2} \max\left\{ \cv(t,u) (\phi(\mcV) + \rho(\mcU)), \sqrt{Q} \right\},
\]
where
\begin{multline*}
Q = \left( \cv(t,u)^2 - 4\cv(t,u) + 4\cv(t,u+1) \right) \phi(\mcV)^2\\
+ 2\left( \cv(t,u)^2 - 2\cv(t,u) + 2\cu(t,u+1) \right) \phi(\mcV) \rho(\mcU) + \cv(t,u)^2 \rho(\mcU)^2.
\end{multline*}
\end{lemma}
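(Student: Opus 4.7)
The plan is to treat the bound in Lemma~\ref{lem:inductHyp} as a function of the unknown nonnegative quantity $x := \phi(\mcU)$ and eliminate the dependence by taking a supremum. With abbreviations $V := \phi(\mcV)$, $R := \rho(\mcU)$, $a := \cv(t,u)$, $b := \cv(t,u+1)$, $c := \cu(t,u+1)$, Lemma~\ref{lem:inductHyp} reads $\E[\phi'\mid\phi] \leq \min\{f(x),g(x)\}$, where
\[
f(x) \;=\; \frac{ax(V+R) + V(bV + cR)}{x+V} \;=\; a(V+R) + \frac{V\bigl[(b-a)V + (c-a)R\bigr]}{x+V}
\]
and $g(x) = x + V$. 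Since $x$ is not under our control, I would bound $\E[\phi' \mid \phi] \leq \sup_{x \geq 0} \min\{f(x), g(x)\}$ and show this supremum does not exceed the expression claimed in the lemma.

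The second form of $f$ exhibits it as a monotone rational function with $f(0) = bV + cR$ and limit $a(V+R)$, its monotonicity direction determined by the sign of $(b-a)V + (c-a)R$. The hypothesis $b \geq 1$, together with $c \geq 0$ inherited from the base cases in Section~\ref{sec:proofs:keyBaseCases}, yields $f(0) \geq V = g(0)$; combined with $g$ strictly increasing and unbounded while $f$ stays bounded, the two curves intersect at exactly one point $x^\ast \geq 0$. Setting $y = x + V$, the equation $f(x) = g(x)$ becomes the quadratic $y^2 - a(V+R)y + V[(a-b)V + (a-c)R] = 0$, whose larger root is
\[
y^\ast \;=\; \tfrac12\bigl[a(V+R) + \sqrt{Q}\bigr],
\]
and a routine expansion identifies the discriminant with the $Q$ displayed in the statement.

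I would finish with a case split on the sign of $(b-a)V + (c-a)R$. When this quantity is positive, $f$ is decreasing; $\min\{f,g\}$ climbs along $g$ up to the crossing and then drops along $f$, so the supremum equals $y^\ast$, and one checks from the sign of the subtracted term that $\sqrt{Q} \geq a(V+R)$, whereupon the stated bound $\tfrac12 a(V+R) + \tfrac12 \max\{a(V+R), \sqrt{Q}\}$ equals precisely $y^\ast$. When the quantity is nonpositive, $f$ is nondecreasing toward $a(V+R)$; after the crossing $\min\{f,g\} = f$ rises to $a(V+R)$ in the limit, so the supremum equals $a(V+R)$, while the reverse inequality $\sqrt{Q} \leq a(V+R)$ causes the stated bound to collapse to $a(V+R)$ as well. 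I expect the main obstacle to be the bookkeeping in this final step, namely verifying in each branch of the case analysis that the sign of $a(V+R) - \sqrt{Q}$ aligns with the branch that delivers the supremum, so that the $\max$ in the statement absorbs the correct quantity. Once those sign checks are recorded, matching the discriminant to the stated $Q$ is purely algebraic.
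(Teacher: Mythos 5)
Your proposal is correct and follows essentially the same route as the paper: write the bound from Lemma~\ref{lem:inductHyp} as $\min\{B_1(\phi(\mcU)), B_2(\phi(\mcU))\}$, observe the sign of $B_1'$ does not change, handle the nondecreasing case via the asymptote $\cv(t,u)(\phi(\mcV)+\rho(\mcU))$ and the decreasing case via the intersection with $B_2$ (a quadratic whose discriminant is $Q$), and package both cases with a $\max$. The only cosmetic difference is that the paper simply takes the max of the two case bounds, while you additionally verify that the $\max$ in the stated formula absorbs the correct quantity in each branch via the sign equivalence $\sqrt{Q}\gtrless \cv(t,u)(\phi(\mcV)+\rho(\mcU)) \iff (\cv(t,u+1)-\cv(t,u))\phi(\mcV)+(\cu(t,u+1)-\cv(t,u))\rho(\mcU)\gtrless 0$; that check is accurate and makes the logic a bit tighter, but the underlying argument is identical.
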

\begin{proof}
The result is obtained by maximizing the bound in Lemma~\ref{lem:inductHyp} with respect to $\phi(\mcU)$.  Let $B_{1}(\phi(\mcU))$ and $B_{2}(\phi(\mcU))$ denote the two terms in the minimum.  The derivative of $B_{1}(\phi(\mcU))$ is given by 
\[
B'_{1}(\phi(\mcU)) = \frac{\phi(\mcV)}{(\phi(\mcU) + \phi(\mcV))^{2}} \bigl[ (\cv(t,u) - \cv(t,u+1)) \phi(\mcV) + (\cv(t,u) - \cu(t,u+1)) \rho(\mcU) \bigr],
\]
which does not change sign as a function of $\phi(\mcU)$.  The two cases $B'_{1}(\phi(\mcU)) \geq 0$ and $B'_{1}(\phi(\mcU)) < 0$ are considered separately below.  Taking the maximum of the resulting bounds \eqref{eqn:elimU1}, \eqref{eqn:elimU2} establishes the lemma.

\emph{Case $B'_{1}(\phi(\mcU)) \geq 0$:} Both $B_{1}(\phi(\mcU))$ and $B_{2}(\phi(\mcU))$ are non-decreasing functions of $\phi(\mcU)$.  The former has the finite supremum 
\begin{equation}\label{eqn:elimU1}
\cv(t,u) (\phi(\mcV) + \rho(\mcU)),
\end{equation}
whereas the latter increases without bound.  Therefore $B_{1}(\phi(\mcU))$ eventually becomes the smaller of the two and \eqref{eqn:elimU1} can be taken as an upper bound on $\min\{ B_{1}(\phi(\mcU)), B_{2}(\phi(\mcU)) \}$.

\emph{Case $B'_{1}(\phi(\mcU)) < 0$:} At $\phi(\mcU) = 0$, we have $B_{1}(0) = \cv(t,u+1) \phi(\mcV) + \cu(t,u+1) \rho(\mcU)$ and $B_{2}(0) = \phi(\mcV)$.  The assumption $\cv(t,u+1) \geq 1$ implies that $B_{1}(0) \geq B_{2}(0)$.  Since $B_{1}(\phi(\mcU))$ is now a decreasing function, the two functions must intersect and the point of intersection then provides an upper bound on $\min\{ B_{1}(\phi(\mcU)), B_{2}(\phi(\mcU)) \}$.  

Solving for the intersection leads after some algebra to a quadratic equation in $\phi(\mcU)$:
\begin{multline*}
0 = \phi(\mcU)^{2} + \left[ 2\phi(\mcV) - \cv(t,u) (\phi(\mcV) + \rho(\mcU)) \right] \phi(\mcU)\\
{} + \phi(\mcV) \left( \phi(\mcV) - \cv(t,u+1) \phi(\mcV) - \cu(t,u+1) \rho(\mcU) \right).
\end{multline*}
Again by the assumption $\cv(t,u+1) \geq 1$, the constant term in this quadratic equation is non-positive, implying that one of the roots is also non-positive and can be discarded.  The remaining positive root is given by 
\[
\phi(\mcU) = \frac{1}{2} \cv(t,u) (\phi(\mcV) + \rho(\mcU)) - \phi(\mcV) + \frac{1}{2} \sqrt{Q}
\]
after simplifying the discriminant to match the stated expression for $Q$.  Evaluating either $B_{1}(\phi(\mcU))$ or $B_{2}(\phi(\mcU))$ at this root gives 
\begin{equation}\label{eqn:elimU2}
\frac{1}{2} \cv(t,u) (\phi(\mcV) + \rho(\mcU)) + \frac{1}{2} \sqrt{Q}.
\end{equation}
\end{proof}

The bound in Lemma~\ref{lem:elimU} is a function of $\phi(\mcV)$ and $\rho(\mcU)$ only but is nonlinear, in contrast to the desired form in Lemma~\ref{lem:key}.  The next step is to linearize the bound by imposing additional conditions \eqref{eqn:condLinear} on the coefficients.

\begin{lemma}\label{lem:linear}
Assume that Lemma~\ref{lem:key} holds for $(t,u)$ and $(t,u+1)$ with coefficients satisfying \eqref{eqn:condLinear}.  Then for the case $(t+1, u+1)$ in the sense of Lemma~\ref{lem:inductHyp}, 
\begin{align*}
\E[\phi' \mid \phi] \leq \, \frac{1}{2} \left[ \cv(t,u) + \left( \cv(t,u)^2 + 4\max\{\cv(t,u+1) - \cv(t,u), 0\} \right)^{1/2} \right] \phi(\mcV) + \cv(t,u) \rho(\mcU).
\end{align*}
\end{lemma}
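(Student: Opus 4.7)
The plan is to derive Lemma~\ref{lem:linear} as a direct linearization of the bound in Lemma~\ref{lem:elimU}. Abbreviate $\cv := \cv(t,u)$, $\cv' := \cv(t,u+1)$, $\cu' := \cu(t,u+1)$, and set $a := \sqrt{\cv^2 + 4\max\{\cv'-\cv,\,0\}}$, so that $a \geq \cv \geq 0$. The key reduction is to show the single inequality
\[
\max\bigl\{\cv(\phi(\mcV) + \rho(\mcU)),\, \sqrt{Q}\,\bigr\} \;\leq\; a\,\phi(\mcV) + \cv\,\rho(\mcU),
\]
after which adding $\frac{1}{2}\cv(\phi(\mcV) + \rho(\mcU))$ to both sides recovers precisely the statement of Lemma~\ref{lem:linear}.

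The first argument of the maximum is dominated trivially since $a \geq \cv$ and the $\rho(\mcU)$ terms already match. For the second, I would square both sides (both are non-negative) and analyze the resulting inequality as a quadratic form in $\phi(\mcV), \rho(\mcU) \geq 0$. The $\rho(\mcU)^2$ coefficients cancel exactly, while the $\phi(\mcV)^2$ coefficient on the right minus the left equals $4\bigl(\max\{\cv'-\cv,0\} - (\cv'-\cv)\bigr) \geq 0$ by definition of the max. Because the $\rho(\mcU)^2$ coefficient vanishes, holding $\phi(\mcV)$ fixed and letting $\rho(\mcU) \to \infty$ forces the cross-term coefficient to be non-negative as well, reducing the entire squared inequality to the single scalar condition
\[
a\,\cv \;\geq\; \cv^2 - 2\cv + 2\cu'.
\]

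I expect this scalar inequality to be the main obstacle, and it is precisely where condition \eqref{eqn:condLinear2} is used. I would split on the sign of $\cv' - \cv$. If $\cv' \leq \cv$, then $a = \cv$ and the target collapses to $\cu' \leq \cv$; since the right-hand side of \eqref{eqn:condLinear2} is a square, it forces $\cu' \leq \cv'$, and combining with $\cv' \leq \cv$ yields the claim. If $\cv' > \cv$, the right-hand side $\cv^2 - 2\cv + 2\cu'$ may be non-positive, in which case the inequality is immediate from $a\cv \geq 0$; otherwise I would square once more and expand $(\cv^2 - 2\cv + 2\cu')^2$ against $a^2 \cv^2 = (\cv^2 + 4(\cv'-\cv))\cv^2$. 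After cancellation the residual simplifies to exactly $4\bigl(\cv^2(\cv'-\cu') - (\cu'-\cv)^2\bigr) \geq 0$, which is identical to \eqref{eqn:condLinear2}. Condition \eqref{eqn:condLinear1} enters only indirectly, as the hypothesis required to invoke Lemma~\ref{lem:elimU} in the first place, so no further structural input is needed to close the argument.
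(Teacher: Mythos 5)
Your proposal is correct and follows essentially the same route as the paper: linearize the $\sqrt{Q}$ term from Lemma~\ref{lem:elimU} by bounding $Q$ above by $\bigl(a\,\phi(\mcV)+\cv(t,u)\,\rho(\mcU)\bigr)^2$, match the $\phi(\mcV)^2$ and $\rho(\mcU)^2$ coefficients, and reduce the remaining cross-term condition to \eqref{eqn:condLinear2} by squaring. The only difference is cosmetic: you treat the $\max\{\cdot,0\}$ and the sign of the cross-term explicitly via a case split, whereas the paper's write-up leaves those (harmless, since $\cv(t,u+1)\geq\cv(t,u)$ for the coefficients actually used) implicit.
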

\begin{proof}
It suffices to linearize the $\sqrt{Q}$ term in Lemma~\ref{lem:elimU}.  In particular, we aim to bound the quadratic function $Q$ from above by the square $(a \phi(\mcV) + b \rho(\mcU))^2$ for all $\phi(\mcV), \rho(\mcU)$ and some choice of $a, b \geq 0$.  The cases $\phi(\mcV) = 0$ and $\rho(\mcU) = 0$ require that 
\begin{align*}
a^2 &\geq \cv(t,u)^2 + 4(\cv(t,u+1) - \cv(t,u)),\\
b^2 &\geq \cv(t,u)^2.
\end{align*}
Setting these inequalities to equalities, the remaining condition for the cross-term is 
\[
ab \geq \cv(t,u)^2 + 2(\cu(t,u+1) - \cv(t,u)).
\]
Equivalently for $a, b \geq 0$,
\begin{multline*}
a^2 b^2 = \left(\cv(t,u)^2 + 4(\cv(t,u+1) - \cv(t,u))\right) \cv(t,u)^2\\
\geq \left(\cv(t,u)^2 + 2(\cu(t,u+1) - \cv(t,u))\right)^2.
\end{multline*}
We rearrange to obtain
\begin{multline*}
4(\cv(t,u+1) - \cv(t,u)) \cv(t,u)^2\\ \geq 4\cv(t,u)^2 (\cu(t,u+1) - \cv(t,u)) + 4(\cu(t,u+1) - \cv(t,u))^2,
\end{multline*}
\[
(\cv(t,u+1) - \cu(t,u+1)) \cv(t,u)^2 \geq (\cu(t,u+1) - \cv(t,u))^2,
\]
the last of which is true by assumption \eqref{eqn:condLinear}.  Thus we conclude that 
\[
\sqrt{Q} \leq \sqrt{\cv(t,u)^2 + 4(\cv(t,u+1) - \cv(t,u))} \phi(\mcV) + \cv(t,u) \rho(\mcU).
\]
Combining this last inequality with Lemma~\ref{lem:elimU} proves the result.
\end{proof}

Given conditions \eqref{eqn:condLinear} and Lemma \ref{lem:linear}, the inductive step for Lemma~\ref{lem:key} can be completed by defining $\cv(t+1, u+1)$ and $\cu(t+1, u+1)$ recursively as in \eqref{eqn:recursions}. 
$\hfill\qed$

Equations \eqref{eqn:condLinear} and \eqref{eqn:recursions} provide sufficient conditions on the coefficients $\cv(t,u)$ and $\cu(t,u)$ to establish Lemma \ref{lem:key} by induction.  Section \ref{sec:proofs:key(b)} shows that these conditions are satisfied by \eqref{eqn:key}.  To motivate the functional form chosen in \eqref{eqn:key}, we first explore the behavior of solutions that satisfy \eqref{eqn:recursions} in particular.  This is done by treating \eqref{eqn:recursions} as a recursion, taking the inequalities to be equalities, and numerically evaluating $\cv(t+1,u+1)$ and $\cu(t+1,u+1)$ starting from the base cases \eqref{eqn:baseCase0} and \eqref{eqn:baseCaseDiag} as boundary conditions. 
More specifically, the computation is carried out as an outer loop over increasing $u$ starting from $u+1 = 1$, and an inner loop over $t$ starting from $t = u+1$.  Figure~\ref{fig:recursion_cv} plots the resulting values for $\cv(t,u)$ over the region $t \geq u$ ($\cu(t,u)$ is simply a shifted copy).  The most striking feature of Figure~\ref{fig:recursion_cv} is that the level contours appear to be lines $t \propto u$ emanating from the origin.  Sampling values at multiple points $(t,u)$ suggests that $\cv(t,u) \approx t/(t-u)$. 
The plot also has the properties that $\cv(t,u)$ is decreasing in $t$ for fixed $u$ and increasing in $u$ for fixed $t$.  These observations lead to the functional form for $\cv(t,u)$ proposed 
in Section \ref{sec:proofs:key(b)}. 

\begin{figure}[ht]
\begin{center}
\centerline{\includegraphics[width=0.65\textwidth]{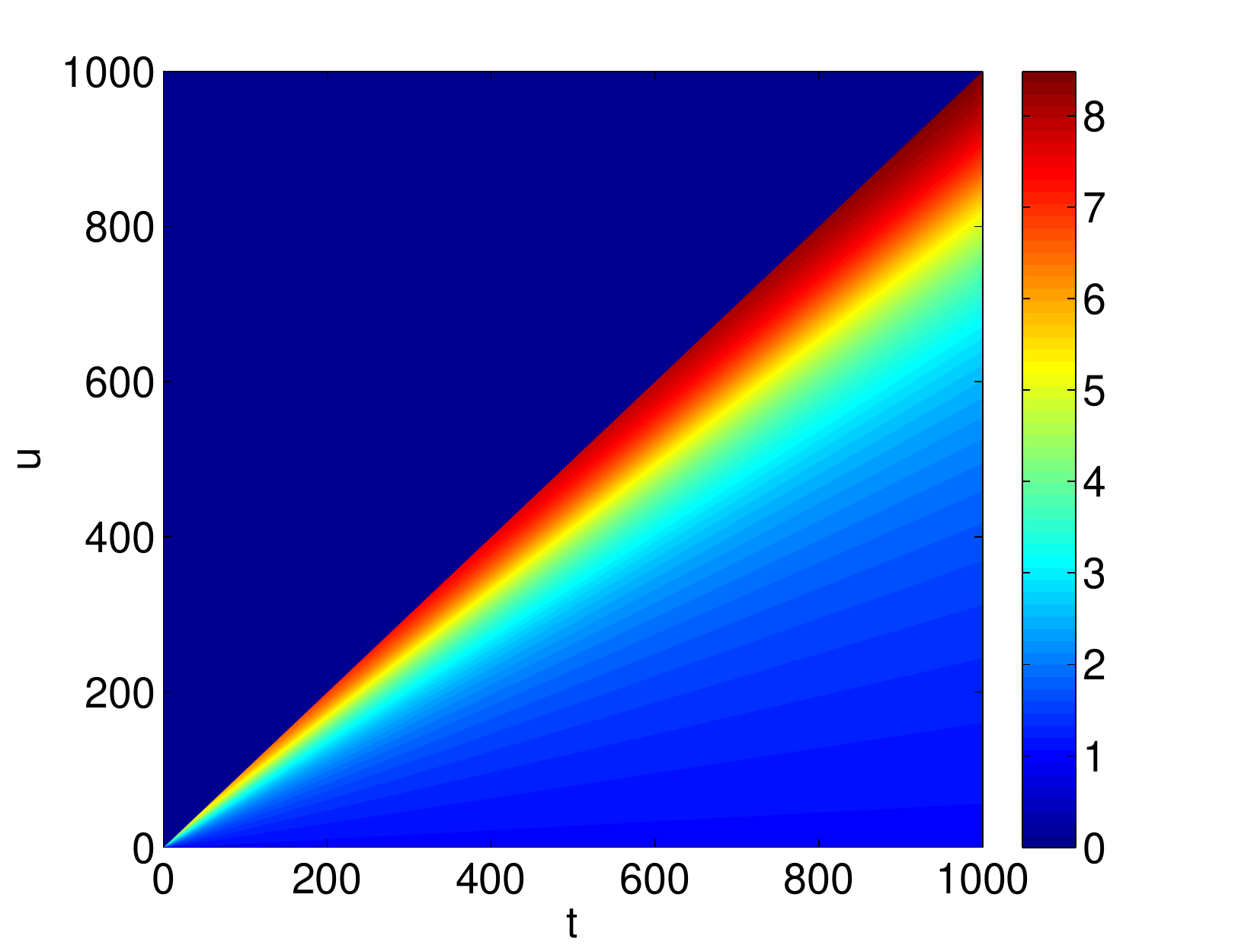}}
\caption{Coefficient $\cv(t,u)$ evaluated numerically in the region $t \geq u$ using recursion \eqref{eqn:recursion_cv} (treated as an equality) with boundary conditions \eqref{eqn:baseCase0} and \eqref{eqn:baseCaseDiag}. The numerical values approximate the function $t / (t-u)$.}
\label{fig:recursion_cv}
\end{center}
\vskip -0.2in
\end{figure}

As for conditions \eqref{eqn:condLinear}, it can be verified directly using the base cases \eqref{eqn:baseCase0} and \eqref{eqn:baseCaseDiag} that they are satisfied for $(t,u) = (1,0)$.  The subsequent 
numerical values in Figure \ref{fig:recursion_cv} 
were found to satisfy \eqref{eqn:condLinear} for all $t > u$ as well.  
This suggests that recursion~\eqref{eqn:recursions} is self-perpetuating in the sense that if \eqref{eqn:condLinear} are satisfied for $(t,u)$, then the values for $\cv(t+1, u+1)$, $\cu(t+1, u+1)$ resulting from \eqref{eqn:recursions} will also satisfy \eqref{eqn:condLinear} for $(t+1, u)$ and $(t+1, u+1)$, i.e.\ points to the right and upper-right.  This self-perpetuating property is not proven however in the present paper.  Instead, it is shown that the proposed functional form \eqref{eqn:closedForm} satisfies \eqref{eqn:condLinear} directly.

\subsubsection{Proof with specific form for coefficients}
\label{sec:proofs:key(b)}

We now prove that Lemma \ref{lem:key} holds for coefficients $\cv(t,u)$, $\cu(t,u)$ given by \eqref{eqn:closedForm} below.  These expressions are more general than \eqref{eqn:key} and are based on
the observations drawn from Figure~\ref{fig:recursion_cv}.
\begin{subequations}\label{eqn:closedForm}
\begin{align}
\cv(t,u) &= \frac{t + au + b}{t - u + b} = 1 + \frac{(a+1) u}{t - u + b}, \quad t \geq u,\label{eqn:closedForm_cv}\\
\cu(t,u) &= \begin{cases}
\cv(t-1, u-1), & t \geq u > 0,\\
0, & t \geq u = 0.
\end{cases}\label{eqn:closedForm_cu}
\end{align}
\end{subequations}
Here $a$ and $b$ are parameters introduced to add flexibility to the basic form $t / (t - u)$ suggested by Figure~\ref{fig:recursion_cv}, 
subject to the constraints $a > -1$, $b > 0$, 
\begin{subequations}\label{eqn:cond_ab}
\begin{align}
a+1 &\geq b,\label{eqn:cond_ab1}\\
ab &\geq 1.\label{eqn:cond_ab2}
\end{align}
\end{subequations}
Equation \eqref{eqn:key} is obtained at the end from \eqref{eqn:closedForm} by optimizing the parameters $a$ and $b$. 
Note that with $a + 1 > 0$, \eqref{eqn:closedForm_cv} is decreasing in $t$ for fixed $u > 0$ and increasing in $u$ for fixed $t$. 

Given the inductive approach and the results established in Sections \ref{sec:proofs:keyBaseCases} and \ref{sec:proofs:key(a)}, the proof requires the remaining steps below. 
First, it is shown that the base cases \eqref{eqn:baseCase0}, \eqref{eqn:baseCaseDiag} from Section~\ref{sec:proofs:keyBaseCases} imply that Lemma \ref{lem:key} is true for the same base cases but with $\cv(t,u)$, $\cu(t,u)$ given by \eqref{eqn:closedForm} instead.  Second, \eqref{eqn:closedForm} is shown to satisfy conditions \eqref{eqn:condLinear} for all $t > u$, thus permitting Lemma \ref{lem:linear} to be used.  Third, \eqref{eqn:closedForm} is also shown to satisfy \eqref{eqn:recursions}, which combined with Lemma \ref{lem:linear} completes the induction.  

Considering the base cases, for $u = 0$, \eqref{eqn:baseCase0} and \eqref{eqn:closedForm} coincide so there is nothing to prove.  For the case $t = u$, $u \geq 1$, Lemma \ref{lem:key} with coefficients given by \eqref{eqn:baseCaseDiag} implies the same with coefficients given by \eqref{eqn:closedForm} provided that 
\begin{multline*}
(1 + H_u) \phi(\mcV) + (1 + H_{u-1}) \rho(\mcU) \leq \left(1 + \frac{(a+1)u}{b}\right) \phi(\mcV) + \left(1 + \frac{(a+1)(u-1)}{b}\right) \rho(\mcU) \\ \forall \; \phi(\mcV), \rho(\mcU).
\end{multline*}
This in turn is ensured if the coefficients satisfy $H_u \leq (a+1) u / b$ for all $u \geq 1$.  The most stringent case is $u = 1$ and is met by assumption \eqref{eqn:cond_ab1}.

For the second step of establishing \eqref{eqn:condLinear}, it is clear that \eqref{eqn:condLinear1} is satisfied by \eqref{eqn:closedForm_cv}.  A direct calculation presented in Appendix \ref{app:condLinear} shows that \eqref{eqn:condLinear2} is also true.
\begin{lemma}\label{lem:condLinear}
Condition \eqref{eqn:condLinear2} is satisfied for all $t > u$ if $\cv(t,u)$ and $\cu(t,u)$ are given by \eqref{eqn:closedForm} and \eqref{eqn:cond_ab2} holds.
\end{lemma}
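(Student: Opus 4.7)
My plan is a direct substitution argument: plug the closed-form expressions \eqref{eqn:closedForm} into condition \eqref{eqn:condLinear2}, simplify, and reduce to an elementary inequality that uses only $t > u$, $b > 0$, $a > -1$, and $ab \geq 1$. To control notation I would introduce the abbreviations $s = t - u + b$ and $c = a+1$, so that
\[
\cv(t,u) = 1 + \frac{cu}{s}, \qquad \cv(t,u+1) = 1 + \frac{c(u+1)}{s-1}, \qquad \cu(t,u+1) = \cv(t-1,u) = 1 + \frac{cu}{s-1}.
\]
Note that $t > u$ gives $s-1 \geq b > 0$, so every denominator is positive.

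Next I would compute the two differences that appear in \eqref{eqn:condLinear2}:
\[
\cv(t,u+1) - \cu(t,u+1) = \frac{c}{s-1}, \qquad \cu(t,u+1) - \cv(t,u) = \frac{cu}{s(s-1)}.
\]
Plugging these together with $\cv(t,u)^2 = (s+cu)^2/s^2$ into \eqref{eqn:condLinear2} and clearing the common positive factor $s^{-2}(s-1)^{-2}$ reduces the condition to the single inequality
\[
(s-1)(s+cu)^2 \geq c u^2.
\]
This is the only nontrivial content to verify.

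For $u = 0$ the right side vanishes and the inequality is immediate. For $u \geq 1$, I would use the very loose bound $(s+cu)^2 \geq (cu)^2 = c^2 u^2$ (valid because $s, cu > 0$), which reduces the task to showing $(s-1)c \geq 1$. Since $s - 1 \geq b$, one has $(s-1)c \geq bc = b(a+1) = ab + b$, and \eqref{eqn:cond_ab2} together with $b > 0$ gives $ab + b > 1$, completing the argument.

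There isn't really a conceptual obstacle here; the entire lemma is an exercise in algebraic bookkeeping, and the main risk is arithmetic slips when combining the three fractions. The only step requiring a tiny bit of thought is choosing the bound $(s+cu)^2 \geq c^2 u^2$ rather than attempting to match the cross term $2scu$ as well; this choice works precisely because the ``slack'' $2scu$ absorbs any tighter requirement that \eqref{eqn:cond_ab2} might otherwise fail to meet, and it is exactly why the simple hypothesis $ab \geq 1$ (rather than something involving $s$ or $u$) suffices uniformly in $t, u$.
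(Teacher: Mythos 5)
Your proof is correct and follows essentially the same route as the paper: compute the two differences, clear the common denominators, and reduce condition \eqref{eqn:condLinear2} to the requirement $(a+1)(t-u-1+b) \geq 1$, which is tightest at $t=u+1$ and is implied by $ab\geq 1$ together with $b>0$. The only cosmetic difference is that the paper keeps the positive cross term $2(a+1)u/(t-u+b)$ from $\cv(t,u)^2$ explicit rather than discarding it via the bound $(s+cu)^2 \geq (cu)^2$, but the substance is identical.
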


Similarly for the third step, it suffices to show that \eqref{eqn:closedForm_cv} satisfies recursion \eqref{eqn:recursion_cv} since \eqref{eqn:closedForm_cu} automatically satisfies \eqref{eqn:recursion_cu}.  A proof is provided in Appendix \ref{app:recursion}.
\begin{lemma}\label{lem:recursion}
Recursion \eqref{eqn:recursion_cv} is satisfied for all $t > u$ if $\cv(t,u)$ is given by \eqref{eqn:closedForm_cv} and \eqref{eqn:cond_ab2} holds.
\end{lemma}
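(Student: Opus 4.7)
The plan is to verify the recursion by direct substitution of the closed form \eqref{eqn:closedForm_cv} and reducing everything to an elementary algebraic inequality in the single variable $s = t - u \geq 1$ that is pinned down by assumption \eqref{eqn:cond_ab2}. Let me abbreviate $X = \cv(t+1,u+1)$, $Y = \cv(t,u)$, $Z = \cv(t,u+1)$. Because $(a+1) > 0$, increasing the numerator $u$ while decreasing the denominator $t-u+b$ strictly increases \eqref{eqn:closedForm_cv}, so $Z > Y$ and the $\max$ in \eqref{eqn:recursion_cv} is active. Since $2X - Y = 1 + (a+1)(u+2)/(s+b) > 0$, I can square both sides of \eqref{eqn:recursion_cv} to reduce it to the equivalent polynomial inequality
\[
X(X-Y) \;\geq\; Z - Y.
\]

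Next I would compute each piece in closed form using $s = t-u$. Writing $Y = 1 + (a+1)u/(s+b)$ and $X = 1 + (a+1)(u+1)/(s+b)$, one gets $X - Y = (a+1)/(s+b)$, and then
\[
X(X-Y) \;=\; \frac{(a+1)\bigl(s+b+(a+1)(u+1)\bigr)}{(s+b)^2}.
\]
A one-line calculation of the difference of two fractions gives $Z - Y = (a+1)(s+u+b)/\bigl((s-1+b)(s+b)\bigr)$, and substituting $w = s+b$ the desired inequality collapses, after cancelling the common factor $(a+1)(u+1)$, to
\[
(a+1)(w-1) \;\geq\; w, \qquad \text{i.e.,} \qquad a(s+b) \;\geq\; a+1.
\]

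The worst case is the smallest value of $s$, namely $s = 1$ (this is why $t > u$ is needed, so that $s - 1 + b > 0$ and the denominator of $Z$ is well defined); the inequality then becomes $a(1+b) \geq a + 1$, i.e., $ab \geq 1$, which is exactly condition \eqref{eqn:cond_ab2}. For larger $s$, note that $ab \geq 1$ with $b > 0$ forces $a > 0$, so $a(s+b)$ is increasing in $s$ and the inequality holds a fortiori. This completes the verification.

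The routine part is just the arithmetic of two rational functions; the only place where real content enters is the final step where $ab \geq 1$ is invoked for the base case $s = 1$. The mildly delicate point to be careful about is the sign of $2X - Y$ before squaring and the fact that $s \geq 1$ (from $t > u$) is exactly what keeps $s - 1 + b$ positive and what makes $ab \geq 1$ the tightest constraint.
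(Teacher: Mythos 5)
Your proof is correct and follows essentially the same route as the paper's: note $\cv(t,u+1)>\cv(t,u)$ to resolve the max, check $2\cv(t+1,u+1)-\cv(t,u)>0$ so squaring is valid, and reduce the resulting polynomial inequality by substituting the closed form to $a(t-u-1+b)\geq 1$, with the worst case $t=u+1$ giving exactly $ab\geq 1$. Your intermediate step $X(X-Y)\geq Z-Y$ is a modest algebraic streamlining of the paper's computation of $(2X-Y)^2-\bigl(Y^2+4(Z-Y)\bigr)$, but the substance and final condition are identical.
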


Having shown that Lemma \ref{lem:key} is true for coefficients given by \eqref{eqn:closedForm} and \eqref{eqn:cond_ab}, the specific expressions in \eqref{eqn:key} are obtained by minimizing $\cv(t,u)$ in \eqref{eqn:closedForm_cv} with respect to $a$, $b$, subject to \eqref{eqn:cond_ab}.  For fixed $a$, minimizing with respect to $b$ yields $b = a + 1$ in light of \eqref{eqn:cond_ab1}, and 
\[
\cv(t,u) = 1 + \frac{(a+1)u}{t-u+(a+1)}.
\]
Minimizing with respect to $a$ then results in $a(a+1) = 1$ from \eqref{eqn:cond_ab2}.  The solution satisfying $a > -1$ is $a = \varphi - 1$ and $b = \varphi$.  $\hfill \qed$

\subsection{Proof of Theorem \ref{thm:main}}
\label{sec:proofs:main}

Denote by $n_{\mcA}$ the number of points in optimal cluster $\mcA$.  In the first iteration of Algorithm \ref{alg:D_Sampling}, the first cluster center is selected from some $\mcA$ with probability $n_{\mcA} / n$.  Conditioned on this event, Lemma \ref{lem:key} is applied with covered set $\mcV = \mcA$, $u = k-1$ uncovered clusters, and $t = \beta k - 1$ remaining cluster centers.  This bounds the final potential $\phi'$ as  
\[
\E[\phi' \mid \phi] \leq \cv(\beta k - 1,k-1) \phi(\mcA) + \cu(\beta k - 1,k-1) (\rho - \rho(\mcA))
\]
where $\cv(t,u)$, $\cu(t,u)$ are given by \eqref{eqn:key}.  Taking the expectation over possible centers in $\mcA$ and using Lemma \ref{lem:firstCluster},
\[
\E[\phi' \mid \mcA] \leq r_u^{(\ell)} \cv(\beta k - 1,k-1) \phi^\ast(\mcA) + \cu(\beta k - 1,k-1) (\rho - \rho(\mcA)).
\]
Taking the expectation over clusters $\mcA$ and recalling that $\rho = r_D^{(\ell)} \phi^\ast$, 
\begin{equation}\label{eqn:main1}
\E[\phi'] \leq r_D^{(\ell)} \cu(\beta k - 1,k-1) \phi^\ast - C \sum_{\mcA} \frac{n_{\mcA}}{n} \phi^\ast(\mcA),
\end{equation}
where
\[
C = r_D^{(\ell)} \cu(\beta k - 1,k-1) - r_u^{(\ell)} \cv(\beta k - 1,k-1).
\]

Next we aim to further bound the last term in \eqref{eqn:main1}. Using \eqref{eqn:key} and $r_D^{(\ell)} = 2^\ell r_u^{(\ell)}$ from Lemma \ref{lem:singleCluster}, 
\begin{align*}
C &= r_u^{(\ell)} \left( 2^\ell \cu(\beta k - 1,k-1) - \cv(\beta k - 1,k-1) \right)\\
&= r_u^{(\ell)} \frac{2^\ell \left((\beta-1)k + \varphi(k-1)\right) - (\beta-1+\varphi)k}{(\beta-1)k + \varphi}\\
&= r_u^{(\ell)} \frac{(2^\ell - 1)(\beta-1)k + \varphi((2^\ell - 1)(k-1) - 1)}{(\beta-1)k + \varphi}.
\end{align*}
The last expression for $C$ is seen to be non-negative for $\beta \geq 1$, $k \geq 2$, and $\ell \geq 1$.  Furthermore, since $n_{\mcA} = 1$ (a singleton cluster) implies that $\phi^\ast(\mcA) = 0$, we have 
\begin{equation}\label{eqn:main2}
\sum_{\mcA} n_{\mcA} \phi^\ast(\mcA) = \sum_{\mcA: n_{\mcA} \geq 2} n_{\mcA} \phi^\ast(\mcA) \geq 2\phi^\ast,
\end{equation}
with equality if $\phi^\ast$ is completely concentrated in clusters of size $2$.  Substituting \eqref{eqn:key_cu} and \eqref{eqn:main2} into \eqref{eqn:main1}, we obtain 
\begin{equation}\label{eqn:main3}
\frac{\E[\phi']}{\phi^\ast} \leq r_D^{(\ell)} \left( 1 + \frac{\varphi(k-2)}{(\beta-1)k + \varphi} \right) - \frac{2C}{n}.
\end{equation}

The last step is to recall \citet[Theorems 3.1 and 5.1]{Arthur2007}, which together state that 
\begin{equation}\label{eqn:main4}
\frac{\E[\phi']}{\phi^\ast} \leq r_D^{(\ell)} (1 + H_{k-1})
\end{equation}
for $\phi'$ resulting from selecting exactly $k$ cluster centers.  In fact, \eqref{eqn:main4} also holds for $\beta k $ centers, $\beta \geq 1$, since adding centers cannot increase the potential.  The proof is completed by taking the minimum of \eqref{eqn:main3} and \eqref{eqn:main4}. $\hfill\qed$

\section{Conclusion and Future Work}
\label{sec:concl}

This paper has shown that simple $D^\ell$ sampling algorithms, including $k$-means++, are guaranteed in expectation to attain a constant-factor bi-criteria approximation to an optimal clustering.  The contributions herein extend and improve upon previous results concerning $D^\ell$ sampling \citep{Arthur2007, Aggarwal2009}.

As noted in Section \ref{sec:results}, the constant $r_D^{(\ell)}$ in Theorem \ref{thm:main} and Corollary \ref{cor:main} represents an opportunity to further improve the approximation bounds.  One possibility is to tighten Lemmas 3.2 and 5.1 in \citet{Arthur2007}, which are the lemmas responsible for the $r_D^{(\ell)}$ factor.  A more significant improvement may result from considering not only the covering of optimal clusters by at least one cluster center, but also the effect of selecting more than one center from a single optimal cluster.  As the number of selected centers increases, an approximation factor analogous to $r_D^{(\ell)}$ would be expected to decrease.  Analysis of algorithms with similar simplicity to $D^\ell$ sampling is also of interest.


\bibliographystyle{plainnat}
\bibliography{k-means++}

\appendix

\section{Proof of Lemma \ref{lem:AV2007Stronger}}
\label{app:AV2007Stronger}

The proof follows the inductive proof of \citet[Lemma 3.3]{Arthur2007} with the notational changes $\mcX_u \to \mcU$, $\mcX_c \to \mcV$, and $8\phi_{\mathrm{OPT}} \to \rho$.  For brevity, only the differences are presented.     

For the first base case $t = 0$, $u > 0$, \citet{Arthur2007} already show that the lemma holds with coefficients $1 = 1 + H_0$, $0 = 1 + H_{-1}$, and $1 = (u-0)/u$.  Similarly for the second base case $t = u = 1$, \citet{Arthur2007} show that $\E[\phi' \mid \phi] \leq 2\phi(\mcV) + \rho(\mcU) = (1 + H_1) \phi(\mcV) + (1 + H_0) \rho(\mcU)$, as required for the stronger version here.

For the first ``covered'' case considered in the inductive step, the argument is the same and the upper bound on the contribution to $\E[\phi' \mid \phi]$ is changed to  
\begin{equation}\label{eqn:AV2007Cov}
\frac{\phi(\mcV)}{\phi} \left[ (1 + H_{t-1}) \phi(\mcV) + (1 + H_{t-2}) \rho(\mcU) + \frac{u-t+1}{u} \phi(\mcU) \right].
\end{equation}
For the second ``uncovered'' case, the first displayed expression in the right-hand column of \citet[page 1030]{Arthur2007} becomes (after applying the bound $\sum_{a\in\mcA} p_a \phi_a \leq \rho(\mcA)$ from Lemma~\ref{lem:singleCluster})
\[
\frac{\phi(\mcA)}{\phi} \left[ (1 + H_{t-1}) (\phi(\mcV) + \rho(\mcA)) + (1 + H_{t-2}) (\rho(\mcU) - \rho(\mcA)) + \frac{u-t}{u-1} (\phi(\mcU) - \phi(\mcA)) \right].
\]
Summing over all uncovered clusters $\mcA \subseteq \mcU$, the contribution to $\E[\phi' \mid \phi]$ is bounded from above by 
\begin{multline*}
\frac{\phi(\mcU)}{\phi} \left[ (1 + H_{t-1}) \phi(\mcV) + (1 + H_{t-2}) \rho(\mcU) + \frac{u-t}{u-1} \phi(\mcU) \right]\\
+ \frac{1}{\phi} \left[ (H_{t-1} - H_{t-2}) \sum_{\mcA\subseteq\mcU} \phi(\mcA) \rho(\mcA) - \frac{u-t}{u-1} \sum_{\mcA\subseteq\mcU} \phi(\mcA)^2 \right].
\end{multline*}
The inner product above can be bounded as 
\begin{equation}\label{eqn:innerProdBound}
\sum_{\mcA\subseteq\mcU} \phi(\mcA) \rho(\mcA) \leq \phi(\mcU) \rho(\mcU),
\end{equation}
with equality if both $\phi(\mcU)$, $\rho(\mcU)$ are completely concentrated in the same cluster $\mcA$.  The sum of squares term can be bounded using the power-mean inequality as in \citet{Arthur2007}.  Hence the contribution to $\E[\phi' \mid \phi]$ is further bounded by 
\begin{equation}\label{eqn:AV2007Uncov}
\frac{\phi(\mcU)}{\phi} \left[ (1 + H_{t-1}) \phi(\mcV) + (1 + H_{t-1}) \rho(\mcU) + \frac{u-t}{u} \phi(\mcU) \right].
\end{equation}

Summing the bounds in \eqref{eqn:AV2007Cov}, \eqref{eqn:AV2007Uncov}, we have 
\[
\E[\phi' \mid \phi] \leq (1 + H_{t-1}) \phi(\mcV) + \left( 1 + \frac{\phi(\mcV) H_{t-2} + \phi(\mcU) H_{t-1}}{\phi} \right) \rho(\mcU) + \frac{u-t}{u} \phi(\mcU) + \frac{\phi(\mcV)}{\phi} \frac{\phi(\mcU)}{u}.
\]
Recalling that $\phi = \phi(\mcV) + \phi(\mcU)$, the right-hand side is seen to be increasing in $\phi(\mcU)$.  Taking the worst case as $\phi(\mcU) \to \phi$ gives 
\begin{align*}
\E[\phi' \mid \phi] &\leq \left(1 + H_{t-1} + \frac{1}{u}\right) \phi(\mcV) + (1 + H_{t-1}) \rho(\mcU) + \frac{u-t}{u} \phi(\mcU)\\
&\leq (1 + H_{t}) \phi(\mcV) + (1 + H_{t-1}) \rho(\mcU) + \frac{u-t}{u} \phi(\mcU)
\end{align*}
since $1/u \leq 1/t$.  This completes the induction. $\hfill\qed$

\section{Proof of Lemma \ref{lem:condLinear}}
\label{app:condLinear}

Substituting \eqref{eqn:closedForm} into the left-most factor in \eqref{eqn:condLinear2},
\begin{align*}
\cv(t,u+1) - \cu(t,u+1)
&= \cv(t,u+1) - \cv(t-1,u)\\
&= \frac{(a+1)(u+1)}{t-u-1+b} - \frac{(a+1)u}{t-1-u+b}\\
&= \frac{a+1}{t-u-1+b}.
\end{align*}
Similarly on the right-hand side of \eqref{eqn:condLinear2},
\begin{align*}
\cu(t,u+1) - \cv(t,u)
&= \cv(t-1,u) - \cv(t,u)\\
&= \frac{(a+1)u}{t-1-u+b} - \frac{(a+1)u}{t-u+b}\\
&= \frac{(a+1)u}{(t-u+b)(t-u-1+b)}.
\end{align*}
Hence 
\begin{align}
&(\cv(t,u+1) - \cu(t,u+1)) \cv(t,u)^2 - (\cu(t,u+1) - \cv(t,u))^2\nonumber\\
&\qquad = \frac{a+1}{t-u-1+b} \left(1 + 2\frac{(a+1)u}{t-u+b} + \frac{(a+1)^2 u^2}{(t-u+b)^2} \right) - \frac{(a+1)^2 u^2}{(t-u+b)^2(t-u-1+b)^2}\nonumber\\
&\qquad = \frac{a+1}{t-u-1+b} \left(1 + 2\frac{(a+1)u}{t-u+b} \right) + \frac{(a+1)^2 u^2 \left[(a+1)(t-u-1+b) - 1\right]}{(t-u+b)^2(t-u-1+b)^2}.\label{eqn:condLinear3}
\end{align}
The first of the two summands in \eqref{eqn:condLinear3} is positive for $t > u \geq 0$.  The second summand 
is also non-negative as long as $(a+1)(t-u-1+b) \geq 1$.  The most stringent case occurs for $t = u+1$ and is implied by \eqref{eqn:cond_ab2}.  We conclude that 
\eqref{eqn:condLinear3} 
is positive, i.e.\ \eqref{eqn:condLinear2} holds. $\hfill\qed$

\section{Proof of Lemma \ref{lem:recursion}}
\label{app:recursion}

As noted earlier, \eqref{eqn:closedForm_cv} has the property that $\cv(t,u+1) \geq \cv(t,u)$ for all $t, u$.  Therefore \eqref{eqn:recursion_cv} is equivalent to 
\begin{equation}\label{eqn:recursion_cv2}
2\cv(t+1,u+1) - \cv(t,u) \geq \sqrt{\cv(t,u)^2 + 4(\cv(t,u+1) - \cv(t,u))}.
\end{equation}
Substituting \eqref{eqn:closedForm_cv} into the left-hand side,
\begin{align*}
2\cv(t+1,u+1) - \cv(t,u) 
&= 1 + 2\frac{(a+1)(u+1)}{t-u+b} - \frac{(a+1)u}{t-u+b}\\
&= 1 + \frac{(a+1)(u+2)}{t-u+b},
\end{align*}
which is seen to be positive for $t > u \geq 0$.  Hence \eqref{eqn:recursion_cv2} is in turn equivalent to 
\[
\left( 2\cv(t+1,u+1) - \cv(t,u) \right)^2 \geq \cv(t,u)^2 + 4(\cv(t,u+1) - \cv(t,u)).
\]
On the left-hand side,
\begin{equation}\label{eqn:recursion_cv3}
\left( 2\cv(t+1,u+1) - \cv(t,u) \right)^2 = 1 + 2\frac{(a+1)(u+2)}{t-u+b} + \frac{(a+1)^2(u+2)^2}{(t-u+b)^2}.
\end{equation}
On the right-hand side, 
\begin{align*}
\cv(t,u+1) - \cv(t,u) &= \frac{(a+1)(u+1)}{t-u-1+b} - \frac{(a+1)u}{t-u+b}\\
&= \frac{(a+1)(t+b)}{(t-u+b)(t-u-1+b)}\\
&= \frac{a+1}{t-u+b} \left(1 + \frac{u+1}{t-u-1+b}\right),
\end{align*}
\[
\cv(t,u)^2 = 1 + 2\frac{(a+1)u}{t-u+b} + \frac{(a+1)^2 u^2}{(t-u+b)^2},
\]
\begin{multline}\label{eqn:recursion_cv4}
\cv(t,u)^2 + 4(\cv(t,u+1) - \cv(t,u))\\
= 1 + 2\frac{(a+1)(u+2)}{t-u+b} + \frac{(a+1)^2 u^2}{(t-u+b)^2} + 4\frac{(a+1)(u+1)}{(t-u+b)(t-u-1+b)}.
\end{multline}
Subtracting \eqref{eqn:recursion_cv4} from \eqref{eqn:recursion_cv3} yields 
\begin{align*}
&\frac{4(a+1)^2 (u+1)}{(t-u+b)^2} - 4\frac{(a+1)(u+1)}{(t-u+b)(t-u-1+b)}\\
= {} &4 \frac{(a+1)(u+1) \left[ a(t-u-1+b) - 1 \right]}{(t-u+b)^2 (t-u-1+b)},
\end{align*}
which is non-negative provided that $a(t-u-1+b) \geq 1$.  As in the proof of Lemma \ref{lem:condLinear}, the most stringent case occurs for $t = u+1$ and is covered by \eqref{eqn:cond_ab2}.  We conclude that \eqref{eqn:recursion_cv3} is at least as large as \eqref{eqn:recursion_cv4}, i.e.\ \eqref{eqn:recursion_cv} holds.
$\hfill\qed$

\end{document}